\documentclass{article}

\PassOptionsToPackage{round}{natbib}
\usepackage[dblblindworkshop, preprint]{neurips_2026}

\usepackage[utf8]{inputenc} 
\usepackage[T1]{fontenc}    
\usepackage[x11names]{xcolor}         
\usepackage[pdfborder={0 0 0}, colorlinks=true, citecolor=Blue4, linkcolor=Blue4, urlcolor=Magenta4]{hyperref}       
\usepackage{url}            
\usepackage{booktabs}       
\usepackage{amsfonts}       
\usepackage{nicefrac}       
\usepackage{microtype}      

\usepackage{xspace}
\usepackage{amssymb, amsmath}
\usepackage{amsthm, thmtools}
\usepackage{mathtools}
\usepackage{dsfont}
\usepackage{enumitem}

\newcommand*{\argmin}{\operatornamewithlimits{argmin}}

\DeclarePairedDelimiter{\norm}{\lVert}{\rVert}
\DeclarePairedDelimiter{\abs}{\lvert}{\rvert}
\DeclarePairedDelimiter{\inner}{\langle}{\rangle}
\DeclarePairedDelimiter{\bracks}{\lbrack}{\rbrack}        
\DeclarePairedDelimiter{\paren}{\lparen}{\rparen}                       
\DeclarePairedDelimiter{\braces}{\lbrace}{\rbrace}          
\let\pairing\inner

\newcommand*{\placeholder}{\makebox[1.1ex]{$\cdot$}}
\newcommand*{\pch}{\placeholder}

\newcommand*{\field}[1]{\mathbb{\MakeUppercase{#1}}}		
\newcommand*{\set}[1]{{\mathcal{\MakeUppercase{#1}}}}			
\newcommand*{\collection}[1]{{\mathfrak{\MakeUppercase{#1}}}} 

\newcommand*{\seqidx}{n}                                        
\newcommand*{\seqbracks}{\paren}
\newcommand*{\seq}[2][\seqidx]{\seqbracks{#2_{#1}}_{#1=1}^\infty}              
\newcommand*{\range}[1]{\braces{1, \dotsc, #1}}                             

\newcommand*{\operator}[1]{{\MakeUppercase{#1}}} 

\newcommand*{\R}{\field{R}} 
\newcommand*{\N}{\field{N}} 

\newcommand*{\multi}[2][t]{#2_{1:#1}}
\renewcommand{\vec}[1]{{\boldsymbol{\mathbf{#1}}}}
\newcommand*{\mat}[1]{\vec{\MakeUppercase{#1}}}
\newcommand*{\eye}{\mat{I}}							
\newcommand*{\transpose}{\mathsf{T}}
\newcommand*{\tr}{{\operatorname{tr}}}						
\newcommand*{\trace}{\tr}

\newcommand*{\eigval}{\lambda}

\newcommand*{\eigvec}{v}

{}

\newcommand*{\observation}{y} 					
\newcommand*{\obsnoise}{\varepsilon}

\newcommand*{\dimension}{d}
\newcommand*{\anydim}{d}
\newcommand*{\locdim}{\dimension}

\newcommand*{\expectation}{\mathbb{E}}
\newcommand*{\E}{\expectation}
\newcommand*{\Ex}[1]{\E\bracks*{#1}}

\newcommand*{\prob}[1]{\mathbb{P}\left[ #1 \right]}

\newcommand*{\rv}{\eta}								

\newcommand*{\as}{\quad {\mathrm{(a.s.)}}}			

\newcommand*{\filtration}{\collection{F}}
\newcommand*{\anyevent}{\set{A}}
\newcommand*{\tstop}{\tau}

\newcommand*{\Hspace}{\set{H}} 						

\newcommand*{\vnoise}{\xi}
\newcommand*{\vncov}{\operator{\Sigma}}

\newcommand*{\outH}{\set{V}}
\newcommand*{\inH}{\set{U}}

\newcommand*{\adj}{*}                       

\newcommand*{\dual}{\adj}                   

\newcommand*{\lopspace}{\set{L}}
\newcommand*{\fspace}{\set{F}}          

\newcommand*{\idop}{\operator{I}}       
\newcommand*{\proj}{\operator{P}}
\newcommand*{\onb}{e}	

\newcommand*{\basisop}{\operator{\onb}}
\newcommand*{\Schatten}{\set{S}}
\newcommand*{\psnorm}{{p}}
\newcommand{\opnorm}{{\mathrm{op}}}

\newcommand*{\infspace}{\set{U}}
\newcommand*{\outfspace}{\set{V}}
\newcommand*{\discretization}{\operator{D}}

\newcommand*{\statenoise}{\epsilon}
\newcommand*{\state}{x}
\newcommand*{\dyn}{\text{dyn}}

\newcommand*{\regfactor}{\lambda}
\newcommand*{\regop}{\operator{U}}
\newcommand*{\regext}{\operator{W}}           
\newcommand*{\regmat}{\mat{V}}
\newcommand*{\obsop}{\mathrm{M}}              
\newcommand*{\obsops}[1][t]{\multi[#1]{\obsop}}
\newcommand*{\loss}{\ell}                     
\newcommand*{\Loss}{L}                        
\newcommand*{\obsspace}{\set{Y}}
\newcommand*{\mdlop}{\operator{F}}            
\newcommand*{\mdlspace}{\set{H}}              
\newcommand*{\targetop}{{\mdlop_\star}}       
\newcommand*{\learntop}{{\widehat{\mdlop}}}   

\newcommand*{\sumcov}{\operator{V}}
\newcommand*{\regsumop}{\operator{C}}

\newcommand*{\testop}{\operator{\Phi}}        

\newcommand*{\invec}{u}
\newcommand*{\outvec}{v}

\newcommand*{\outfun}{g}

\newcommand*{\infun}{u}
\newcommand*{\martingale}{\rho}
\newcommand*{\sumrv}{\operator{S}}

\newcommand*{\opkernel}{\operator{K}}

\newcommand*{\parameter}{\theta}
\newcommand*{\parameters}{\vec{\parameter}}

\newcommand*{\paramspace}{\Theta} 
\newcommand*{\paramcov}{\mat{\Sigma}}
\newcommand*{\paramrkhs}{\Hspace_\paramspace}
\newcommand*{\paramkernel}{\opkernel_\paramspace}

\newcommand*{\iteridx}{t}

\newcommand*{\anyscalar}{a}
\newcommand*{\anyvec}{\vec{v}}
\newcommand*{\anyvecelem}{v}
\newcommand*{\anyfunction}{h}

\newcommand*{\anyoperator}{\operator{A}}
\newcommand*{\anotheroperator}{\operator{B}}

\newcommand*{\half}{{\nicefrac{1}{2}}}
\newcommand*{\fhalf}{\frac{1}{2}}

\def\mmiddle#1{\mathrel{}\middle#1\mathrel{}} 

\newcommand*{\given}{\mmiddle{|}}

{}
\let\emptyset\varnothing{}

\newcommand*{\iid}{i.i.d.\xspace}

\declaretheorem[numberwithin=section]{theorem}
\declaretheorem[numberlike=theorem]{lemma}
\declaretheorem[numberlike=theorem]{corollary}

\declaretheorem{definition}

\newlist{assumptions}{enumerate}{1}
\setlist[assumptions]{label=\textnormal{\textbf{A\thetheorem.\arabic{*}}.}, ref=A\thetheorem.\arabic*}

\workshoptitle{AI for Stochastic Dynamics: From Theoretical Foundations to Scientific Applications}

\title{Sequential operator learning under dependent data}

\author{%
  Rafael Oliveira\\
  CSIRO Technology\\
  Sydney, Australia\\
  \texttt{rafael.dossantosdeoliveira@csiro.au}
}

\begin{document}

\maketitle

\begin{abstract}
Learning operators from sequentially collected data arises in adaptive experimental design, Bayesian optimization, and dynamical-system modelling, where observations may be dependent, and future inputs or sensing operators may depend on preceding data. We derive time-uniform self-normalized concentration bounds for stochastic processes in Hilbert spaces with vector-valued noise. We use these bounds to obtain regression-error guarantees for linear operators, including targets outside the Hilbert estimation space, and for nonlinear parametric operators trained with strongly convex losses and regularizers. Our results allow possibly infinite-dimensional inputs and outputs without independence or mixing assumptions, providing a major step towards convergence guarantees for adaptive operator learning and learning from stochastic dynamical data.
\end{abstract}

\section{Introduction}

Operator learning provides a framework for learning mappings between function spaces, with applications throughout scientific machine learning and dynamical-system modelling \citep{Kovachki2023,Boulle2024}. Prominent nonlinear architectures include neural operators \citep{Kovachki2023} and DeepONets \citep{Lu2021}. Sequential dependence arises naturally in weather forecasting, where models such as FourCastNet learn evolution operators from temporally ordered spatial fields \citep{Kurth2023fourcastnet}, and long-horizon climate modelling \citep{Wang2026multiscale}. More generally, scientific observations may form a dependent trajectory, while future inputs, forcing conditions, or sensing operators may be selected using the evolving model. These settings fall outside the fixed-design or independent-data assumptions underlying many existing learning guarantees.

Adaptive data collection for operator learning has recently been studied through optimal experimental design \citep{Xu2026}, active learning \citep{Li2024mrafno,Musekamp2025,Subedi2025onthebenefits}, and Bayesian optimization \citep{Guilhoto2024,Oliveira2025nots}. These settings may also involve partially observed function-valued operator outputs (e.g., through finite grids or sensor networks) together with structured noise. Hence, available theoretical guarantees remain limited for such general adaptive settings.

\paragraph{Contribution.}
Building on classical self-normalized concentration \citep{Pena2009article,Abbasi-Yadkori2011} and recent vector-valued extensions \citep{Chowdhury2021,Chugg2025variational,Martinez-Taboada2026}, we derive a time-uniform concentration inequality for general Hilbert-valued stochastic processes and use it to establish regression-error guarantees for linear and nonlinear operators learned from sequentially dependent data. Our linear result accommodates targets more general than those represented by the Hilbert estimation space, while our nonlinear result covers parametric operator models trained with a general class of loss functions and regularizers. The latter builds on an operator-valued extension of a model-based reproducing kernel Hilbert space (RKHS) construction \citep{Oliveira2026kernel}. Together, these results provide a common concentration framework for analyzing operator learning from dependent trajectories and adaptively collected observations.

\paragraph{Related work.}
Operator-learning theory addresses several complementary notions of approximation and estimation error. Approximation results establish the expressivity of Fourier neural operators (FNOs) and DeepONets and provide architecture-dependent error estimates \citep{Kovachki2021,Lanthaler2022error}. More recent statistical analyses control generalization for nonlinear operators, including neural operators, but typically assume independently sampled training data \citep{Lee2024,Reinhardt2024}. For linear operators, vector-valued RKHS theory provides a classical regression framework \citep{Micchelli2003,Carmeli2010}. \citet{Chowdhury2020} derive adaptive confidence bounds for a particular operator-valued regresson problem arising from conditional distributions, although their vector-valued concentration argument has a gap. \citet{Chowdhury2022} later obtain a valid but looser bound under stronger Hilbert-Schmidt assumptions. Related linear guarantees include \citet{Subedi2025onthebenefits} and \citet{Wang2026bovvrkhs}, the latter requiring trace-class operator-valued kernels. Our analysis instead allows for non-trace-class kernels and extends to the case of nonlinear regression problems.

Dependent observations have also been studied in dynamical-system learning: \citet{Foster2020learning} derive single-trajectory guarantees for structured finite-dimensional nonlinear systems, while \citet{Hou2023sparse} analyze RKHS-based operator estimation under \iid or mixing assumptions. Our results instead provide time-uniform guarantees in possibly infinite-dimensional spaces under general predictable data collection, without independence or mixing assumptions. 

\section{Background and problem formulation}
\label{sec:background}

\paragraph{Notation.} We work with separable real Hilbert and Banach spaces. For a Banach space $\fspace$, its dual is denoted by $\fspace^*$. We use $\inner{\cdot,\cdot}$ for both inner products and dual pairings, with the meaning determined by the spaces of the arguments. In particular, $\inner{f,\varphi}=f(\varphi)$ for $f\in\fspace^*$ and $\varphi\in\fspace$. Norms are denoted by $\norm{\cdot}$, with a space subscript added when needed for clarity. We write $\lopspace(\inH,\outH)$ for the bounded linear operators from $\inH$ to $\outH$ and $\anyoperator^\adj$ for the Banach or Hilbert adjoint, as appropriate. For a positive-semidefinite operator $\anotheroperator$ on a Hilbert space, let $\norm{\anyfunction}_\anotheroperator^2:=\inner{\anyfunction,\anotheroperator\anyfunction}$, and write $\anyoperator\preceq\anotheroperator$ when $\anotheroperator-\anyoperator$ is positive semidefinite. The Schatten classes are denoted by $\Schatten_p(\inH,\outH)$, with $p=1$ and $p=2$ corresponding to trace-class and Hilbert--Schmidt operators. For trace-class $\anyoperator$, $\det(\idop+\anyoperator)$ denotes the Fredholm determinant. More detailed definitions are presented in \autoref{app:bg}.

\paragraph{Sequential operator learning.}
Let $\mdlop_\star:\infspace\to\outfspace$ be an unknown, possibly nonlinear operator between Hilbert spaces which we want to approximate. At iteration $\iteridx\in\N$, we observe:
\begin{equation*}
    \observation_\iteridx
    =
    \discretization_\iteridx
    \mdlop_\star(\infun_\iteridx)
    +
    \vnoise_\iteridx
    \in\obsspace_\iteridx,
\end{equation*}
where $\discretization_\iteridx\in\lopspace(\outfspace,\obsspace_\iteridx)$ may represent partial observation or discretization mapping to a Hilbert space $\obsspace_\iteridx$. We assume that $\infun_\iteridx$ and $\discretization_\iteridx$ are predictable with respect to a filtration $\braces{\filtration_\iteridx}_{\iteridx\geq0}$ representing the information available to the algorithm (the observed data and any algorithmic randomness revealed up to each round), while $\vnoise_\iteridx$ is $\filtration_\iteridx$-measurable and conditionally sub-Gaussian noise, given $\filtration_{\iteridx-1}$. Thus, both the inputs and observation operators may depend arbitrarily on the preceding data.

This formulation includes stochastic dynamics. For example, if $\state_{\iteridx+1}=\mdlop_\dyn(\state_\iteridx)+\statenoise_\iteridx$ and $\observation_\iteridx=\discretization_\iteridx\state_\iteridx$, then $\mdlop_\star$ may represent the predictable map from the current observation or an estimated latent state to the next function-valued state. The inputs need only be predictable representations of the available history, so the observed process need not be Markovian or fully observed. The same formulation covers deliberate adaptivity, where future observations are chosen based on available data.

\section{Main result}
\label{sec:main}
Our main technical result extends self-normalized concentration to general Hilbert-valued increments. Supporting definitions are provided in \autoref{app:bg}, and our proofs are located in \autoref{app:main-proofs}.

\begin{theorem}[Self-normalized vector-valued concentration]
\label{thr:sn-sequence}
Let $\Hspace$ be a separable real Hilbert space and $\braces{\filtration_\iteridx}_{\iteridx=0}^\infty$ a filtration. Let $\braces{\rv_\iteridx}_{\iteridx=1}^\infty$ be an adapted $\Hspace$-valued sequence such that $\rv_\iteridx$ is conditionally $\vncov_\iteridx$-sub-Gaussian, where $\vncov_\iteridx$ is a predictable positive-semidefinite trace-class operator. Then, for any boundedly invertible positive-definite operator $\regop$ on $\Hspace$ and any $\delta\in(0,1]$,
\begin{equation}
    \prob{
        \forall\iteridx\in\N,\quad
        \norm*{
            \sum_{i=1}^\iteridx\rv_i
        }_{(\regop+\sumcov_\iteridx)^{-1}}^2
        \leq
        2\log\paren*{
            \frac{
                \det\paren{
                    \idop+
                    \regop^{-1/2}
                    \sumcov_\iteridx
                    \regop^{-1/2}
                }^{1/2}
            }{\delta}
        }
    }
    \geq
    1-\delta,
    \label{eq:sn-sequence}
\end{equation}
where $\sumcov_\iteridx:=\sum_{i=1}^\iteridx\vncov_i$.
\end{theorem}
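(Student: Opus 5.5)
We will use the method of mixtures (pseudo-maximization), following \citet{Abbasi-Yadkori2011} and \citet{Pena2009article}, with a Gaussian mixing measure on $\Hspace$ whose covariance is $\regop^{-1}$. The only infinite-dimensional subtlety is that $\regop^{-1}$ need not be trace class, so this Gaussian may not exist on $\Hspace$. We therefore first prove the bound after projecting onto finite-dimensional subspaces and then pass to the limit.

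\textbf{Step 1 (supermartingale).} Write $S_\iteridx:=\sum_{i\le \iteridx}\rv_i$. For fixed $h\in\Hspace$, define
\[
M_\iteridx^h:=\exp\paren*{\inner{h,S_\iteridx}-\tfrac12\norm{h}_{\sumcov_\iteridx}^2}.
\]
Conditional $\vncov_\iteridx$-sub-Gaussianity means $\Ex{\exp\inner{h,\rv_\iteridx}\given\filtration_{\iteridx-1}}\le\exp(\tfrac12\norm{h}_{\vncov_\iteridx}^2)$. Since $\vncov_\iteridx$ is predictable, $M^h$ is a nonnegative supermartingale with $M_0^h=1$.

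\textbf{Step 2 (finite-dimensional mixing).} Fix an orthonormal basis $(\onb_k)$ of $\Hspace$. Let $\proj_n$ be the projection onto $\mathrm{span}\{\onb_1,\dots,\onb_n\}$, and let $\regop_n:=\proj_n\regop\proj_n$ restricted to that range; it is positive definite and invertible there. Mix $M^h$ over $h\sim\mathcal N(0,\regop_n^{-1})$ on the range of $\proj_n$. By Tonelli, the mixture $\bar M^{(n)}_\iteridx$ is again a nonnegative supermartingale with initial value $1$. Completing the square gives
\[
\bar M^{(n)}_\iteridx=\det\paren*{\idop+\regop_n^{-1/2}\proj_n\sumcov_\iteridx\proj_n\regop_n^{-1/2}}^{-1/2}\exp\paren*{\tfrac12\norm{\proj_n S_\iteridx}^2_{(\regop_n+\proj_n\sumcov_\iteridx\proj_n)^{-1}}}.
\]
Ville's maximal inequality for nonnegative supermartingales then yields $\prob{\exists \iteridx:\ \bar M^{(n)}_\iteridx\ge 1/\delta}\le\delta$. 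Time-uniformity is automatic here, and no stopping-time argument is needed.

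\textbf{Step 3 (limit $n\to\infty$).} This is the main obstacle. Let $E_n$ be the event that the projected inequality holds for all $\iteridx$; then $\prob{E_n}\ge1-\delta$.

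For the determinant, we need $\det(\idop+\regop_n^{-1/2}\proj_n\sumcov_\iteridx\proj_n\regop_n^{-1/2})\to\det(\idop+\regop^{-1/2}\sumcov_\iteridx\regop^{-1/2})$. This follows from two facts. First, the determinant equals $\det(\idop+\sumcov_\iteridx^{1/2}\proj_n\regop_n^{-1}\proj_n\sumcov_\iteridx^{1/2})$ by Sylvester's identity. Second, $\proj_n\regop_n^{-1}\proj_n\to\regop^{-1}$ strongly; this is the standard Galerkin/Schur-complement convergence for a bounded, boundedly invertible positive operator. Since $\sumcov_\iteridx^{1/2}$ is Hilbert--Schmidt, the convergence holds in trace norm, and the Fredholm determinant is continuous in trace norm.

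For the quadratic form, we need $\norm{\proj_n S}^2_{(\regop_n+\proj_n\sumcov\proj_n)^{-1}}\to\norm{S}^2_{(\regop+\sumcov)^{-1}}$. This is the same Galerkin convergence, applied to $\regop+\sumcov$ in place of $\regop$. A cleaner route is to use the variational form $\norm{S}^2_{A^{-1}}=\sup_h\{2\inner{h,S}-\norm{h}_A^2\}$. The projected quantity equals this supremum restricted to $h$ in the range of $\proj_n$, so it is monotonically increasing in $n$ and converges to the full supremum. With the determinant side also converging, the projected inequality passes to the limit, giving the inequality of \eqref{eq:sn-sequence} on $\limsup_n E_n$.

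It remains to control this event. The events $E_n$ need not be nested. However, the quadratic forms are monotone increasing in $n$, and we can check from the Schur complement that the projected determinants increase as well, so $E_{n+1}\subseteq E_n$ is not immediate. To avoid relying on nestedness, we instead use Fatou: $\prob{\limsup E_n}\ge\limsup\prob{E_n}\ge1-\delta$. Every $\omega\in\limsup_n E_n$ satisfies the projected bound along a subsequence, so by the convergence above it satisfies the inequality for every $\iteridx$ simultaneously. This completes the proof.
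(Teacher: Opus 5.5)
Your proof is correct, but it takes a different route from the paper's.

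\textbf{The paper's route.} It first proves a one-shot, infinite-dimensional version of de la Pe\~na's expectation inequality for a single random vector (Theorem~\ref{thr:sn-general}). It projects onto an eigenbasis of $\regop$, applies the finite-dimensional result, and passes to the limit using Fatou's lemma and the finite-section formula for Fredholm determinants (Lemma~\ref{thr:fredholm-finite}). That reduction uses $\proj_d\regop^{-1}=\regop^{-1}\proj_d$, so it only covers diagonalizable $\regop$. The general case (Corollary~\ref{thr:sn-general-pd}) therefore needs a Weyl--von Neumann (Kuroda) perturbation to pure point spectrum, plus a second continuity and Fatou argument. Time-uniformity is then obtained directly in infinite dimensions, via the first-crossing stopping time, Markov's inequality, and $t\to\infty$.

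\textbf{Your route.} You get time-uniformity already in finite dimensions, from the Gaussian mixture of $M^h$ over $h\in\proj_n\Hspace$ together with Ville's inequality. You then pass to $n\to\infty$ pathwise, at the level of events, using $\prob{\limsup_n E_n}\ge\limsup_n\prob{E_n}$.

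\textbf{What your route buys.} The limit step works for an arbitrary orthonormal basis and an arbitrary coercive $\regop$. The Galerkin convergence $\proj_n\regop_n^{-1}\proj_n\to\regop^{-1}$ holds strongly, with $\norm{\regop_n^{-1}}\le 1/a_0$ where $\regop\succeq a_0\idop$ (C\'ea's lemma gives $\norm{(\regop^{-1}-\proj_n\regop_n^{-1}\proj_n)x}\le (\norm{\regop}/a_0)\norm{(\idop-\proj_n)\regop^{-1}x}$). Combined with Sylvester's identity and the Hilbert--Schmidt property of $\sumcov_\iteridx^{1/2}$, this gives trace-norm convergence of the determinants. Concretely, dominated convergence over an orthonormal basis shows $\norm{(\proj_n\regop_n^{-1}\proj_n-\regop^{-1})\sumcov_\iteridx^{1/2}}_2\to0$. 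The variational formula gives monotone convergence of the quadratic forms. So you need neither diagonalizability nor Kuroda's theorem. You also land directly on the symmetrized determinant $\det(\idop+\regop^{-1/2}\sumcov_\iteridx\regop^{-1/2})$ in the statement. The paper's appendix results are phrased with $\det(\idop+\regop^{-1}\vncov)$ and silently rely on Sylvester's identity to match. The two steps above that you only asserted, the Galerkin convergence and its upgrade to trace norm, are standard and hold as claimed.

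\textbf{What the paper's route buys.} Modularity: Theorem~\ref{thr:sn-general} and Corollary~\ref{thr:sn-general-pd} are standalone expectation bounds for a single Hilbert-valued vector with a random covariance proxy. They can be reused elsewhere, and the sequential statement then follows from a generic stopping argument.
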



The proof extends finite-dimensional self-normalization through finite-rank approximations and continuity of the relevant quadratic forms and Fredholm determinants. The complete argument is given in \autoref{thr:sn-general} and \autoref{thr:sn-general-pd}. The following section applies \autoref{thr:sn-sequence} to regression.


\section{Applications to operator learning}
\label{sec:app}
We now apply \autoref{thr:sn-sequence} to operator regression under sequentially dependent observations. We first consider linear operators, including targets outside the Hilbert estimation space, and then extend the analysis to nonlinear parametric models with strongly convex losses and regularizers.

\subsection{Linear regression in Hilbert spaces}
\label{sec:linear-regression}

Let $\mdlop_\star: \inH\to\outH$ be an unknown bounded linear operator between Hilbert spaces $\inH$ and $\outH$. Suppose we approximate $\mdlop_\star$ by a Hilbert-Schmidt operator $\widehat{\mdlop}_\iteridx \in \Schatten_2(\inH,\outH)$ after $\iteridx\geq 1$ observations. As $\mdlop_\star$ need not be Hilbert-Schmidt, we measure approximation errors through the dual pairing with trace-class operators $\testop\in\Schatten_1(\inH,\outH)$,
\begin{equation}
    \pairing{\mdlop, \testop} := \trace(\mdlop^\adj\testop) = \trace(\testop^\adj \mdlop), \qquad \mdlop\in\lopspace(\inH,\outH).
\end{equation}
For instance, given a rank-one test operator $\testop=\outvec\otimes\invec$, then $\pairing{\targetop-\learntop_\iteridx,\testop} = \inner{\outvec,(\targetop-\learntop_\iteridx)\invec}$, which measures the prediction error at input $\invec$ along the output direction $\outvec$. To avoid clutter, we write $\fspace:=\Schatten_1(\inH,\outH)$ and $\mdlspace:=\Schatten_2(\inH,\outH)$, with $\fspace^\dual$ identified with $\lopspace(\inH,\outH)$ through the trace pairing.

For each $\iteridx\in\N$, let $\obsop_\iteridx:\obsspace_\iteridx\to\fspace$ be a predictable bounded linear map and let $\obsop_\iteridx^\adj:\fspace^\dual\to\obsspace_\iteridx$ denote its Banach adjoint, identifying the dual of the Hilbert space $\obsspace_\iteridx$ with itself through the Riesz representation. We observe $\observation_\iteridx=\obsop_\iteridx^\adj(\targetop)+\vnoise_\iteridx$. For example, $\obsop_\iteridx(\outvec)=(\discretization_\iteridx^\adj\outvec)\otimes\invec_\iteridx$, whose adjoint satisfies $\obsop_\iteridx^\adj(\mdlop)=\discretization_\iteridx\mdlop\invec_\iteridx$. At each round, we estimate $\learntop_\iteridx$ from the accumulated data by minimizing:
\begin{equation}
    \Loss_\iteridx(\mdlop) := \regfactor\norm{\mdlop}_\mdlspace^2 + \sum_{i=1}^\iteridx \norm{\observation_i-\obsop_i^\adj(\mdlop)}_{\obsspace_i}^2, \qquad \mdlop\in\mdlspace,
\end{equation}
where $\regfactor>0$ is a regularization factor. We can now state our result on the linear regression error.

\begin{theorem}[Linear regression error]
\label{thr:linear-regression}
Let $\obsop_\iteridx:\obsspace_\iteridx\to\fspace$ be predictable bounded linear maps and suppose that $\vnoise_\iteridx$ is conditionally $\vncov_\iteridx$-sub-Gaussian, where $\vncov_\iteridx$ is $\filtration_{\iteridx-1}$-measurable and positive semidefinite, for $\iteridx\geq 1$. Assume, for a fixed $\sigma_\vnoise^2>0$, that $\norm{\vncov_\iteridx}_\opnorm\leq\sigma_\vnoise^2$ and that $\obsop_\iteridx\vncov_\iteridx\obsop_\iteridx^\adj$ is almost surely trace class on $\mdlspace$, for all $\iteridx\in\N$. Define $\regsumop_\iteridx:=\regfactor\idop+\sum_{i=1}^\iteridx\obsop_i\obsop_i^\adj$ and $\sumcov_\iteridx:=\sum_{i=1}^\iteridx\obsop_i\vncov_i\obsop_i^\adj$. Then, given $\delta\in(0,1]$, with probability at least $1-\delta$,
\begin{equation}
    \abs{\pairing{\targetop-\learntop_\iteridx,\testop}}
    \leq
    \regfactor\norm{\regsumop_\iteridx^{-1}(\testop)}_1\norm{\targetop}_\opnorm
    +
    \norm{\regsumop_\iteridx^{-\half}(\testop)}_2
    \sqrt{
        2\sigma_\vnoise^2
        \log\paren*{
            \frac{
                \det\paren{
                    \idop+\regfactor^{-1}\sigma_\vnoise^{-2}\sumcov_\iteridx
                }^\half
            }{\delta}
        }
    },
    \label{eq:linear-regression-bound}
\end{equation}
simultaneously for all $\testop\in\Schatten_1(\inH,\outH)$ and $\iteridx\in\N$.
\end{theorem}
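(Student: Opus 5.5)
The plan is to write the regularized least-squares solution in closed form, split the error into a bias term and a noise term, and control the noise term with \autoref{thr:sn-sequence} applied in the Hilbert space $\mdlspace=\Schatten_2(\inH,\outH)$.

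\textbf{Closed form and error decomposition.} Setting the derivative of $\Loss_\iteridx$ to zero in $\mdlspace$ gives
\[
\regsumop_\iteridx\learntop_\iteridx=\sum_{i=1}^\iteridx\obsop_i\observation_i .
\]
Here $\obsop_i$ maps $\obsspace_i$ into $\fspace\subset\mdlspace$, and $\obsop_i^\adj$ restricted to $\mdlspace$ is the Hilbert adjoint. Substituting $\observation_i=\obsop_i^\adj(\targetop)+\vnoise_i$ yields
\[
\regsumop_\iteridx\learntop_\iteridx=\paren*{\regsumop_\iteridx-\regfactor\idop}\targetop+\sum_{i=1}^\iteridx\obsop_i\vnoise_i .
\]
This identity holds formally even though $\targetop\notin\mdlspace$, because $\obsop_i\obsop_i^\adj\targetop\in\fspace$. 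Hence
\[
\targetop-\learntop_\iteridx=\regfactor\regsumop_\iteridx^{-1}\targetop-\regsumop_\iteridx^{-1}\sumrv_\iteridx,
\qquad \sumrv_\iteridx:=\sum_{i\le\iteridx}\obsop_i\vnoise_i .
\]
To avoid applying $\regsumop_\iteridx^{-1}$ to the non-Hilbert--Schmidt $\targetop$, I will carry out the whole computation in dual form. Pair with $\testop$ and use self-adjointness of $\regsumop_\iteridx$ to get
\[
\pairing{\targetop-\learntop_\iteridx,\testop}=\regfactor\pairing{\targetop,\regsumop_\iteridx^{-1}\testop}-\inner{\sumrv_\iteridx,\regsumop_\iteridx^{-1}\testop}_\mdlspace .
\]
This requires $\regsumop_\iteridx^{-1}$ to map $\fspace$ into $\fspace$. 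I expect to check this via the Neumann-type identity
\[
\regsumop_\iteridx^{-1}=\regfactor^{-1}\paren*{\idop-\textstyle\sum_i\obsop_i\obsop_i^\adj\regsumop_\iteridx^{-1}},
\]
since each $\obsop_i$ has range in $\fspace$. The first term is then bounded by trace duality: $\abs{\pairing{\targetop,\anyoperator}}\le\norm{\targetop}_\opnorm\norm{\anyoperator}_1$. This gives the bias term of \eqref{eq:linear-regression-bound}.

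\textbf{Noise term.} By Cauchy--Schwarz in the $\regsumop_\iteridx$ geometry,
\[
\abs{\inner{\sumrv_\iteridx,\regsumop_\iteridx^{-1}\testop}}\le\norm{\regsumop_\iteridx^{-\half}\testop}_2\,\norm{\sumrv_\iteridx}_{\regsumop_\iteridx^{-1}} .
\]
The increment $\rv_i:=\obsop_i\vnoise_i$ is $\mdlspace$-valued. For $h\in\mdlspace$ we have $\inner{h,\rv_i}=\inner{\obsop_i^\adj h,\vnoise_i}$, so $\rv_i$ is conditionally $\obsop_i\vncov_i\obsop_i^\adj$-sub-Gaussian, and this operator is predictable and trace class by assumption. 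Apply \autoref{thr:sn-sequence} with $\regop:=\regfactor\sigma_\vnoise^2\idop$. Since $\vncov_i\preceq\sigma_\vnoise^2\idop$, we get
\[
\sumcov_\iteridx\preceq\sigma_\vnoise^2\textstyle\sum_i\obsop_i\obsop_i^\adj,
\qquad\text{hence}\qquad
\regop+\sumcov_\iteridx\preceq\sigma_\vnoise^2\regsumop_\iteridx .
\]
It follows that $\norm{\sumrv_\iteridx}^2_{\regsumop_\iteridx^{-1}}\le\sigma_\vnoise^2\norm{\sumrv_\iteridx}^2_{(\regop+\sumcov_\iteridx)^{-1}}$. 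The determinant from \autoref{thr:sn-sequence} is exactly $\det(\idop+\regfactor^{-1}\sigma_\vnoise^{-2}\sumcov_\iteridx)^{1/2}$. Because the event of \autoref{thr:sn-sequence} is uniform in $\iteridx$ and does not involve $\testop$, the resulting bound holds simultaneously for all $\testop$ and $\iteridx$.

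\textbf{Main obstacle.} The main difficulty is the rigorous handling of $\targetop\notin\mdlspace$. One must justify that $\learntop_\iteridx$ exists in $\mdlspace$, which follows from strong convexity of $\Loss_\iteridx$ on $\mdlspace$. One must also show that the error identity is valid as an equality of pairings against $\testop\in\fspace$, which needs $\regsumop_\iteridx^{-1}(\fspace)\subseteq\fspace$ together with the symmetry $\pairing{\targetop,\regsumop_\iteridx\anyoperator}=\inner{\cdot}$-type manipulations for $\anyoperator\in\fspace$. A secondary technical point is unboundedness: $\sum_i\obsop_i\obsop_i^\adj$ may be unbounded on $\mdlspace$, in which case I would instead work with the quadratic-form definition or assume boundedness of each $\obsop_i$ as given. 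If the first route fails, I would truncate $\targetop$ by finite-rank approximations $\targetop P_n$, which are Hilbert--Schmidt, and then pass to the limit in the pairing.
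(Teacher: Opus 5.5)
Your proposal is correct and follows the paper's proof essentially step for step. The shared steps are the closed-form estimator, the dual-form decomposition $\pairing{\targetop-\learntop_\iteridx,\testop}=\regfactor\pairing{\targetop,\regsumop_\iteridx^{-1}\testop}-\inner{\sumrv_\iteridx,\regsumop_\iteridx^{-1}\testop}$, trace duality for the bias term, and Cauchy--Schwarz combined with \autoref{thr:sn-sequence} at $\regop=\regfactor\sigma_\vnoise^2\idop$ for the noise term. The only differences are cosmetic: you verify $\regsumop_\iteridx^{-1}(\fspace)\subseteq\fspace$ through $\regsumop_\iteridx^{-1}=\regfactor^{-1}(\idop-\sum_i\obsop_i\obsop_i^\adj\regsumop_\iteridx^{-1})$ rather than Woodbury, which is simpler and equally valid. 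Also, your unboundedness concern and truncation fallback are unnecessary, since each $\obsop_i$ is bounded into $\fspace$ and $\norm{\cdot}_2\le\norm{\cdot}_1$, so $\sum_i\obsop_i\obsop_i^\adj$ is bounded on $\mdlspace$.
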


To illustrate how \autoref{thr:linear-regression} can yield convergence rates once the data-collection geometry is controlled, consider a simple finite-dimensional persistent-excitation setting.

\begin{corollary}[Illustrative convergence rate]
\label{cor:linear-persistent-excitation}
Under the assumptions of \autoref{thr:linear-regression}, suppose that the observation operators satisfy $\operatorname{Ran}(\obsop_\iteridx)\subseteq\mathcal{S}$ for a fixed $d$-dimensional subspace $\mathcal{S}\subset\fspace$, $\norm{\obsop_\iteridx}_\opnorm\leq L$, and $\sum_{i=1}^\iteridx\obsop_i\obsop_i^\adj\succeq\kappa\iteridx P_{\mathcal{S}}$ for some $\kappa,L>0$ and all sufficiently large $\iteridx$. Then, for every fixed $\testop\in\mathcal{S}$ and $\delta\in(0,1]$, with probability at least $1-\delta$,
\begin{equation*}
    \abs{\pairing{\targetop-\learntop_\iteridx,\testop}}
    =
    O\paren*{
        \sqrt{
            \frac{
                d\log(1+\iteridx)+\log(1/\delta)
            }{
                \iteridx
            }
        }
    },
\end{equation*}
uniformly over sufficiently large $\iteridx$. In particular, for fixed $d$ and $\delta$, the error is $O(\sqrt{\log\iteridx/\iteridx})$.
\end{corollary}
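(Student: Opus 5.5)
The plan is to specialise the bound \eqref{eq:linear-regression-bound} of \autoref{thr:linear-regression} to a fixed $\testop\in\mathcal{S}$ and control its two terms separately: the bias term through the persistent-excitation lower bound, and the log-determinant through the finite-dimensional range.

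\textbf{Step 1: the operator $\regsumop_\iteridx$ on $\mathcal{S}$.} Every $\obsop_i\obsop_i^\adj$ has range in $\mathcal{S}$ and is self-adjoint and positive semidefinite on $\mdlspace$. Hence $\sum_i\obsop_i\obsop_i^\adj$ vanishes on $\mathcal{S}^\perp$, and $\mathcal{S}$ reduces $\regsumop_\iteridx$. Here $\mathcal{S}$ is finite-dimensional, so its elements lie in both $\Schatten_1$ and $\Schatten_2$ and all norms on it are equivalent up to constants depending on $\mathcal{S}$. On $\mathcal{S}$, persistent excitation gives $\regsumop_\iteridx\succeq(\regfactor+\kappa\iteridx)\idop$ for all large $\iteridx$. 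Therefore, for $\testop\in\mathcal{S}$, $\regsumop_\iteridx^{-1}\testop\in\mathcal{S}$ and
\[
\norm{\regsumop_\iteridx^{-1}\testop}_2\le(\regfactor+\kappa\iteridx)^{-1}\norm{\testop}_2,
\]
and, by norm equivalence on $\mathcal{S}$, $\norm{\regsumop_\iteridx^{-1}\testop}_1\le c_{\mathcal{S}}(\regfactor+\kappa\iteridx)^{-1}\norm{\testop}_2$. It follows that the bias term is $O(1/\iteridx)$. Similarly, $\norm{\regsumop_\iteridx^{-\half}\testop}_2\le(\regfactor+\kappa\iteridx)^{-\half}\norm{\testop}_2=O(\iteridx^{-1/2})$.

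\textbf{Step 2: the log-determinant.} Since $\vncov_i\preceq\sigma_\vnoise^2\idop$, we have $\sumcov_\iteridx\preceq\sigma_\vnoise^2\sum_i\obsop_i\obsop_i^\adj$, and $\sumcov_\iteridx$ also has range in $\mathcal{S}$ and is supported there. The Fredholm determinant is monotone under $\preceq$ for trace-class positive operators, so
\[
\log\det\paren{\idop+\regfactor^{-1}\sigma_\vnoise^{-2}\sumcov_\iteridx}\le\log\det\paren{\idop_{\mathcal{S}}+\regfactor^{-1}\textstyle\sum_i\obsop_i\obsop_i^\adj|_{\mathcal{S}}}.
\]
The right-hand side is a $d$-dimensional determinant. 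Each summand has operator norm $\norm{\obsop_i}_\opnorm^2\le L^2$, where the norm of $\obsop_i$ is taken relative to the $\mdlspace$ norm on $\mathcal{S}$, again up to an $\mathcal{S}$-dependent constant. Applying AM--GM to the eigenvalues then gives the bound $d\log(1+\iteridx L^2/(\regfactor d))=O(d\log(1+\iteridx))$.

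\textbf{Step 3: combine.} Plugging Steps 1 and 2 into \eqref{eq:linear-regression-bound} yields
\[
O(1/\iteridx)+O\paren*{\iteridx^{-1/2}\sqrt{d\log(1+\iteridx)+\log(1/\delta)}},
\]
and the first term is dominated by the second. The bound holds on the single event of \autoref{thr:linear-regression}, so it is uniform over all large $\iteridx$.

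\textbf{Main obstacle.} I expect the delicate step to be the bookkeeping between the three norms involved: the trace norm of $\fspace$ in which $\mathcal{S}$ is defined, the Hilbert--Schmidt norm of $\mdlspace$ on which $\regsumop_\iteridx$ acts, and the norm in which $L$ is measured. Relatedly, interpreting $P_{\mathcal{S}}$ as the $\mdlspace$-orthogonal projection is needed for the reduction argument in Step 1. I would handle both points explicitly via finite-dimensionality of $\mathcal{S}$, absorbing all resulting constants into the $O(\cdot)$.
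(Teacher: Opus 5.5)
Your proposal is correct and follows essentially the same route as the paper. For the two $\testop$-dependent factors, you use invariance of $\mathcal{S}$ under $\regsumop_\iteridx$ plus persistent excitation, with the trace/Hilbert--Schmidt equivalence constant on $\mathcal{S}$. For the log-determinant, you use $\sigma_\vnoise^{-2}\sumcov_\iteridx\preceq\sum_i\obsop_i\obsop_i^\adj$ together with determinant monotonicity and the rank-$d$ restriction.

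There is one small slip. With only $\norm{\obsop_i}_\opnorm\le L$, each $\obsop_i\obsop_i^\adj$ can have trace up to $dL^2$; for instance, $\obsop_i\obsop_i^\adj=L^2P_{\mathcal{S}}$ is allowed. So AM--GM gives $d\log(1+\iteridx L^2/\regfactor)$, which is the paper's bound, rather than $d\log(1+\iteridx L^2/(\regfactor d))$. Also, no $\mathcal{S}$-dependent constant is needed for $L$, since $\norm{\cdot}_2\le\norm{\cdot}_1$. Neither point affects the $O(d\log(1+\iteridx))$ conclusion.
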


More generally, obtaining rates without the finite-dimensional persistent-excitation assumption requires controlling the variance-like $\norm{\regsumop_\iteridx^{-\half}(\testop)}_2$ and Fredholm log-determinant terms in \eqref{eq:linear-regression-bound}, potentially through operator-valued analogues of the maximum information gain \citep{Vakili2021}.

\subsection{Nonlinear regression with parametric models}
\label{sec:nonlinear-regression}

We now approximate $\mdlop_\star:\infspace\to\outfspace$ by a nonlinear parametric model $\mdlop:\infspace\times\paramspace\to\outfspace$, such as a neural operator. Let $\mdlspace$ be a separable Hilbert space containing the model class $\braces{\mdlop(\pch,\parameter)}_{\parameter\in\paramspace}$. Assume evaluations are given by $\obsop_\iteridx^\adj(\mdlop)$, where $\obsop_\iteridx:\obsspace_\iteridx\to\mdlspace$ is a Hilbert-Schmidt operator, for $\mdlop\in\mdlspace$. Given pointwise losses $\loss_\iteridx: \obsspace_\iteridx \times \obsspace_\iteridx \to \R$ and predictable regularizers $R_\iteridx:\mdlspace\to\R$, we estimate:
\begin{equation}
    \widehat{\parameter}_\iteridx
    \in
    \argmin_{\parameter\in\paramspace}
    \left\{
        \sum_{i=1}^\iteridx
        \loss_i\bigl(\obsop_i^\adj(\mdlop(\cdot,\parameter)),\observation_i\bigr)
        +
        R_\iteridx\bigl(\mdlop(\cdot,\parameter)\bigr)
    \right\}, \qquad \iteridx\in\N.
    \label{eq:nonlinear-parametric-estimator}
\end{equation}

\begin{theorem}[Nonlinear parametric regression]
\label{thr:nonlinear-regression}
For each $\iteridx\in\N$, assume that (i) $\loss_\iteridx(\cdot,\observation_\iteridx)$ is twice Fr\'echet differentiable and $\alpha$-strongly convex, (ii) $R_\iteridx$ is Fr\'echet differentiable and $\regfactor_\iteridx$-strongly convex, with $\regfactor_\iteridx\geq\regfactor_0>0$, and (iii) $\vnoise_\iteridx:=\nabla_1\loss_\iteridx(\obsop_\iteridx^\adj(\mdlop_\star),\observation_\iteridx)$ is conditionally $\vncov_\iteridx$-sub-Gaussian, with $\norm{\vncov_\iteridx}_\opnorm\leq\sigma_\vnoise^2$. Suppose that $\mdlop_\star=\mdlop(\cdot,\parameter_\star)$, for some $\parameter_\star\in\paramspace$, and that $\widehat{\parameter}_\iteridx$ is a global solution of \eqref{eq:nonlinear-parametric-estimator}. Then, for every $\delta\in(0,1]$, with probability at least $1-\delta$, simultaneously for every $\iteridx\in\N$,
\begin{equation}
    \begin{split}
        \norm{\mdlop(\pch,\widehat{\parameter}_\iteridx)-\mdlop_\star}_{\operator{H}_\iteridx}
        \leq
        2\norm{
            \nabla R_\iteridx(\mdlop_\star)
        }_{\operator{H}_\iteridx^{-1}}
        +
        \frac{2\sigma_\vnoise}{\sqrt{\alpha}}
        \sqrt{
            2\log\paren*{
                \frac{
                    \det\paren{
                        \idop+\regfactor_0^{-1}\alpha\obsops\obsops^\adj
                    }^\half
                }{\delta}
            }
        }
    \end{split}
    \label{eq:nonlinear-regression-error}
\end{equation}
where $\operator{H}_\iteridx:=\regfactor_\iteridx\idop+\alpha\sum_{i=1}^\iteridx\obsop_i\obsop_i^\adj$.
\end{theorem}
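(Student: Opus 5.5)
We will work entirely in the Hilbert space $\mdlspace$ and treat the estimate $\widehat{\mdlop}_\iteridx:=\mdlop(\pch,\widehat{\parameter}_\iteridx)$ and the target $\mdlop_\star=\mdlop(\pch,\parameter_\star)$ as two points of $\mdlspace$. Define the functional
\[
\Loss_\iteridx(\mdlop):=\sum_{i=1}^\iteridx\loss_i\bigl(\obsop_i^\adj\mdlop,\observation_i\bigr)+R_\iteridx(\mdlop),\qquad \mdlop\in\mdlspace.
\]
Global optimality of $\widehat{\parameter}_\iteridx$ over $\paramspace$ gives $\Loss_\iteridx(\widehat{\mdlop}_\iteridx)\leq\Loss_\iteridx(\mdlop_\star)$, because $\mdlop_\star$ lies in the model class. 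No convexity in $\parameter$ is needed; only this comparison of function values is used. This is the operator-valued version of the model-based RKHS argument.

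\textbf{Step 1: strong-convexity lower bound.} Let $\Delta:=\widehat{\mdlop}_\iteridx-\mdlop_\star$. Each loss $\loss_i(\cdot,\observation_i)$ is $\alpha$-strongly convex, and $R_\iteridx$ is $\regfactor_\iteridx$-strongly convex. Composing each loss with the linear map $\obsop_i^\adj$ and summing gives
\[
\Loss_\iteridx(\widehat{\mdlop}_\iteridx)-\Loss_\iteridx(\mdlop_\star)\geq\inner{\nabla\Loss_\iteridx(\mdlop_\star),\Delta}+\tfrac12\norm{\Delta}_{\operator{H}_\iteridx}^2,
\]
where $\operator{H}_\iteridx=\regfactor_\iteridx\idop+\alpha\sum_i\obsop_i\obsop_i^\adj$. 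The gradient is
\[
\nabla\Loss_\iteridx(\mdlop_\star)=\sum_{i=1}^\iteridx\obsop_i\vnoise_i+\nabla R_\iteridx(\mdlop_\star).
\]
Combining this with the optimality inequality and applying Cauchy--Schwarz in the $\operator{H}_\iteridx$/$\operator{H}_\iteridx^{-1}$ duality yields
\[
\tfrac12\norm{\Delta}_{\operator{H}_\iteridx}^2\leq\norm{\Delta}_{\operator{H}_\iteridx}\Bigl(\norm{\nabla R_\iteridx(\mdlop_\star)}_{\operator{H}_\iteridx^{-1}}+\norm{S_\iteridx}_{\operator{H}_\iteridx^{-1}}\Bigr),\qquad S_\iteridx:=\sum_{i=1}^\iteridx\obsop_i\vnoise_i.
\]
Hence
\[
\norm{\Delta}_{\operator{H}_\iteridx}\leq 2\norm{\nabla R_\iteridx(\mdlop_\star)}_{\operator{H}_\iteridx^{-1}}+2\norm{S_\iteridx}_{\operator{H}_\iteridx^{-1}}.
\]
The twice-differentiability assumption is used only to justify the strong-convexity inequality and the gradient formula via the chain rule.

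\textbf{Step 2: self-normalized bound on the noise term.} Apply \autoref{thr:sn-sequence} with $\rv_i:=\obsop_i\vnoise_i\in\mdlspace$. Since $\obsop_i$ is predictable and $\vnoise_i$ is conditionally $\vncov_i$-sub-Gaussian, $\rv_i$ is conditionally $\obsop_i\vncov_i\obsop_i^\adj$-sub-Gaussian. This covariance satisfies $\obsop_i\vncov_i\obsop_i^\adj\preceq\sigma_\vnoise^2\obsop_i\obsop_i^\adj$, which is trace class because $\obsop_i$ is Hilbert--Schmidt. By monotonicity of the sub-Gaussian definition, we may take the predictable proxy $\sigma_\vnoise^2\obsop_i\obsop_i^\adj$ instead. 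Then use the rescaled regularizer $\regop:=\sigma_\vnoise^2\alpha^{-1}\regfactor_0\idop$, so that
\[
\regop+\sumcov_\iteridx=\sigma_\vnoise^2\alpha^{-1}\bigl(\regfactor_0\idop+\alpha\obsops\obsops^\adj\bigr).
\]
This gives, uniformly in $\iteridx$ with probability at least $1-\delta$,
\[
\norm{S_\iteridx}^2_{(\regfactor_0\idop+\alpha\obsops\obsops^\adj)^{-1}}\leq\frac{\sigma_\vnoise^2}{\alpha}\,2\log\frac{\det\bigl(\idop+\regfactor_0^{-1}\alpha\obsops\obsops^\adj\bigr)^{\half}}{\delta}.
\]
Finally, $\regfactor_\iteridx\geq\regfactor_0$ gives $\operator{H}_\iteridx\succeq\regfactor_0\idop+\alpha\obsops\obsops^\adj$. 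Operator monotonicity of inversion then bounds $\norm{S_\iteridx}_{\operator{H}_\iteridx^{-1}}$ by the left-hand side above. Substituting into Step 1 gives \eqref{eq:nonlinear-regression-error}.

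\textbf{Main obstacle.} The delicate step is the reduction to \autoref{thr:sn-sequence}. The regularizer $\regfactor_0\idop$ is fixed, but $\operator{H}_\iteridx$ involves the data-dependent $\regfactor_\iteridx$, so the comparison must happen after the concentration event is established. The reduction also needs the sub-Gaussian proxy to be replaceable by the larger predictable trace-class operator $\sigma_\vnoise^2\obsop_i\obsop_i^\adj$. I will first check from the definition in \autoref{app:bg} that the conditional MGF bound is monotone in the proxy. I will also check that the pushforward under a predictable bounded map preserves sub-Gaussianity, by conditioning on $\filtration_{i-1}$ and testing against $\obsop_i^\adj h$.
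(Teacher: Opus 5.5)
Your proof is correct and gives exactly \eqref{eq:nonlinear-regression-error}, but Step 1 follows a different route from the paper.

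The paper introduces the minimizer $\widetilde{\mdlop}_\iteridx$ of the lifted loss $\Loss_\iteridx$ over all of $\mdlspace$. It combines both inequalities of \autoref{lem:loss-difference} with $\Loss_\iteridx(\mdlop(\pch,\widehat{\parameter}_\iteridx))\leq\Loss_\iteridx(\mdlop_\star)$. This bounds the $\operator{H}_\iteridx$-distances from both $\mdlop(\pch,\widehat{\parameter}_\iteridx)$ and $\mdlop_\star$ to $\widetilde{\mdlop}_\iteridx$ by $\norm{\nabla\Loss_\iteridx(\mdlop_\star)}_{\operator{H}_\iteridx^{-1}}$, and the triangle inequality then adds the two legs.

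You instead write the strong-convexity inequality once, at $\mdlop_\star$, and close with Cauchy--Schwarz in the $\operator{H}_\iteridx$/$\operator{H}_\iteridx^{-1}$ pairing. The factor $2$ then comes from comparing a quadratic term with a linear one, and you get the same constant as the paper.

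Your basic-inequality argument is shorter. It also never needs the lifted minimizer to exist; the paper takes this for granted, and it does hold for a continuous strongly convex functional on a Hilbert space, but it is an extra fact. The paper's detour is more modular and shows that the factor $2$ is the price of the estimator not minimizing over all of $\mdlspace$; for $\widetilde{\mdlop}_\iteridx$ itself the lemma gives constant $1$.

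In Step 2 you also order things differently. You inflate the proxy to $\sigma_\vnoise^2\obsop_i\obsop_i^\adj$ before invoking \autoref{thr:sn-sequence}. The paper applies that theorem with the exact proxy $\sumcov_\iteridx=\sum_i\obsop_i\vncov_i\obsop_i^\adj$ and only afterwards simplifies the determinant via \eqref{eq:sumcov-ub}. Your ordering avoids monotonicity of the Fredholm determinant and needs predictability only of $\obsop_i$, not of $\vncov_i$. The paper's ordering keeps the sharper intermediate factor $\det(\idop+\alpha\sigma_\vnoise^{-2}\regfactor_0^{-1}\sumcov_\iteridx)$, which reflects the spectral structure of the noise.
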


The result follows from \autoref{lem:loss-difference}, global optimality over the parametric model class, and the self-normalized control of the cumulative gradient noise $\sum_{i=1}^\iteridx\obsop_i\vnoise_i$. The factor of two arises because the parametric estimator need not minimize the lifted loss over the whole space $\mdlspace$.

For a practical regularizer, we extend the model-based RKHS construction of \citet{Oliveira2026kernel} to operator-valued predictions. Let $\paramrkhs$ be an RKHS over $\paramspace$ with kernel $\paramkernel: \paramspace\times\paramspace\to\lopspace(\outfspace)$, and suppose that $\mdlop(\infun, \pch): \paramspace\to\outfspace$ lie in $\paramrkhs$, for each $\infun\in\infspace$. Then, letting $\Psi_\infun: \paramrkhs\to\outfspace$ represent the model evaluation at $\infun\in\infspace$, such that $\Psi_\infun\paramkernel(\pch, \parameter) = \mdlop(\infun,\parameter)$,  it follows that $\opkernel_{\mdlop}(\infun,\infun'):=\Psi_{\infun}\Psi_{\infun'}^\adj$ is an operator-valued kernel whose RKHS $\mdlspace$ contains the model class $\braces{\mdlop(\pch,\parameter)}_{\parameter\in\paramspace}$.

Given a predictable random initialization $\parameter_{\iteridx,0}$, one may take $R_\iteridx(\mdlop):=(\regfactor_\iteridx/2)\norm{\mdlop-\mdlop(\cdot,\parameter_{\iteridx,0})}_{\mdlspace}^2$. This satisfies the preceding assumptions and gives $\norm{\nabla R_\iteridx(\mdlop_\star)}_{\operator{H}_\iteridx^{-1}}\leq\sqrt{\regfactor_\iteridx}\norm{\mdlop_\star-\mdlop(\cdot,\parameter_{\iteridx,0})}_{\mdlspace}$. Moreover, the latter distance is bounded by $\norm{\paramkernel(\cdot,\parameter_\star)-\paramkernel(\cdot,\parameter_{\iteridx,0})}_{\paramrkhs}$, allowing the initialization term in \eqref{eq:nonlinear-regression-error} to be controlled through the parameter kernel and the loss \eqref{eq:nonlinear-parametric-estimator} to be practically implemented. For linear kernels, for instance, $R_t(\mdlop_\parameters) = \norm{\paramkernel(\cdot,\parameter_\star)-\paramkernel(\cdot,\parameter_{\iteridx,0})}_{\paramrkhs}^2 = \norm{\parameter-\parameter_{\iteridx,0}}_2^2$, recovering classic L2 regularization, and a squared-exponential $\opkernel_\parameters$ yields $R_t(\mdlop_\parameters) = 2 - 2e^{-\gamma\norm{\parameter-\parameter_{\iteridx,0}}_2^2}$, $\gamma > 0$.

\section{Discussion}
We presented a general concentration framework for sequential operator learning under dependent, adaptive, and possibly infinite-dimensional observations. A key feature is that Hilbert-valued noise is controlled through operator-valued sub-Gaussian covariance proxies, allowing its spectral structure to enter the confidence bounds. In linear regression, this relaxes restrictions arising from scalar-proxy sub-Gaussian analyses \citep[e.g.,][]{Chowdhury2021, Wang2026bovvrkhs}: rather than requiring Hilbert-Schmidt observation operators, it suffices that the induced proxies $\obsop_\iteridx\vncov_\iteridx\obsop_\iteridx^\adj$ are trace class. This includes conditional mean embedding regression with $\operator{K}(x,x')=k(x,x')\idop$ \citep{Grunewalder2012}, whose Gram operator is not trace class for infinite-dimensional outputs.

Self-normalized bounds of this kind underpin confidence sets, regret bounds, and convergence guarantees throughout sequential learning \citep{Abbasi-Yadkori2011,Chowdhury2017,Chowdhury2021,Wang2026bovvrkhs}. Our illustrative corollary shows how the linear bound yields convergence along persistently excited directions, while more general rates are possible via, e.g., operator-valued extensions of Gaussian-process information-gain bounds \citep{Vakili2021}. The nonlinear result is more preliminary, as pointwise confidence bounds additionally require controlling a variance-like evaluation term. These developments are particularly relevant to adaptive experimental design and Bayesian optimization with neural operators, and may also support learning guarantees from stochastic dynamical data under suitable excitation or dependence assumptions.

\bibliographystyle{plainnat}
\bibliography{mendeley,extra}


\newpage
\appendix

\section{Further background}
\label{app:bg}

\subsection{Definitions}
\label{app:defs}
\begin{definition}[$p$-Schatten norm]
    \label{def:schatten}
    Let $\anyoperator: \inH \to \outH$ be a bounded linear operator between separable Hilbert spaces $\inH$ and $\outH$. Given $1 \leq p < \infty$, the Schatten $p$-norm of $\anyoperator$ is defined as:
    \begin{equation}
        \norm{\anyoperator}_\psnorm := (\trace((\anyoperator^\adj \anyoperator)^{p/2}))^{\frac{1}{p}}.
    \end{equation}
    For $p=\infty$, we set $\norm{\anyoperator}_\infty := \norm{\anyoperator}_\opnorm$. The space of all bounded linear operators between $\inH$ and $\outH$ with finite Schatten $p$-norm is denoted by $\Schatten_p(\inH, \outH)$.
\end{definition}
For any $p \geq 1$, the Schatten $p$-class $\Schatten_p(\inH, \outH)$ is a Banach space under the Schatten $p$-norm. As special cases, $\Schatten_1(\inH, \outH)$ corresponds to the space of all trace-class operators on $\Hspace$ equipped with the trace (or nuclear) norm. For $p=2$, $\Schatten_2(\inH, \outH)$ is the space of Hilbert-Schmidt operators on $\Hspace$, which is actually a Hilbert space equipped with the inner product $\inner{\anyoperator, \anotheroperator} := \trace(\anyoperator^\adj\anotheroperator)$. As $p\to\infty$, $\Schatten_\infty(\inH, \outH)$ is the Banach space of compact operators on $\Hspace$ equipped with $\norm{\cdot}_\infty = \norm{\cdot}_\opnorm$. Therefore, we have the set inclusion:
\begin{equation}
    \Schatten_1(\inH, \outH) \subset \Schatten_2(\inH, \outH) \subset \Schatten_\infty(\inH, \outH) \subset \lopspace(\inH,\outH),
\end{equation}
or more generally $\Schatten_p(\inH, \outH) \subset \Schatten_q(\inH, \outH)$, for any $1\leq p < q \leq \infty$.
Schatten $p$-norms are also unitarily invariant \citep{Kuroda1958} and obey an extension of H\"older's inequality \citep{Simon1977}:
\begin{equation}
    \abs{\trace(\anyoperator\anotheroperator)} \leq \norm{\anyoperator}_p \norm{\anotheroperator}_q,
    \label{eq:schatten-holder}
\end{equation}
for $\anyoperator \in \Schatten_p(\outH, \fspace)$ and $\anotheroperator \in \Schatten_q(\inH, \outH)$ with $p, q \in [1, \infty]$ such that $p^{-1} + q^{-1} = 1$, where $\fspace$ is another separable Hilbert space. In particular, for $\anyoperator \in \lopspace(\outH, \fspace)$ and $\anotheroperator\in\Schatten_1(\inH, \outH)$, it holds that \citep[Thm. 18.11]{Conway2000}:
\begin{equation}
    \abs{\trace(\anyoperator\anotheroperator)} \leq \norm{\anyoperator}_\opnorm \norm{\anotheroperator}_1.
\end{equation}
For more details about Schatten $p$-classes and $p$-norms, the reader is referred to standard operator theory textbooks \citep[e.g.,][]{Weidmann1980, Conway2000, Simon2010}.

\begin{definition}[Fredholm determinant]
    \label{def:fredholm-det}
    Let $\anyoperator\in\Schatten_1(\Hspace)$ be a trace-class operator on a separable Hilbert space $\Hspace$. The Fredholm determinant $\det(\idop + \anyoperator)$ is given by:
    \begin{equation}
        \det(\idop + \anyoperator) = \prod_{i=1}^\infty (1 + \eigval_i(\anyoperator)),
    \end{equation}
    where $\eigval_i(\anyoperator)$ denotes the $i$th eigenvalue of $\anyoperator$, including multiplicities and zeros.
\end{definition}

The Fredholm determinant has the following basic properties \citep{Simon1977}:
\begin{alignat}{2}
    &\llap{\text{(Series)}\quad} &\det(\idop + \anyoperator) &= \exp\paren*{\sum_{\seqidx=1}^\infty \frac{(-1)^\seqidx}{\seqidx}\trace(\anyoperator^\seqidx)} \label{eq:det-exp}\\
    &\llap{\text{(Continuity)}\quad} &\norm{\anyoperator_\seqidx - \anyoperator}_1 \xrightarrow{\seqidx\to\infty} 0 &\implies \det(\idop + \anyoperator_\seqidx) \xrightarrow{\seqidx\to\infty} \det(\idop + \anyoperator), \label{eq:fredholm-continuity}
\end{alignat}
for a trace-class operator $\anyoperator$ and a sequence of trace-class operators $\{\anyoperator_\seqidx\}_{\seqidx=1}^\infty$. A few determinant identities from the finite-dimensional case can be easily extended to the infinite-dimensional setting by taking limits of finite-dimensional matrix sequences (see \autoref{thr:fredholm-finite}) due to the continuity of the Fredholm determinant under the trace norm \eqref{eq:fredholm-continuity}. A useful consequence is an extension of Sylvester's determinant identity to infinite-dimensional operators (see \citealp[Eq. 9.37]{Schmudgen2012}, or \citealp[Ch. IV]{Gohberg1969}):
\begin{equation}
    \det(\idop + \anyoperator\anotheroperator) = \det(\idop + \anotheroperator\anyoperator), \quad \anyoperator\in\Schatten_1(\Hspace), \quad \anotheroperator \in \lopspace(\Hspace).
    \label{eq:fredholm-sylvester}
\end{equation}

\begin{definition}[Conditionally sub-Gaussian vector]
    \label{def:sub-gaussian}
    Let $\outH$ be a Hilbert space, $\{\filtration_\iteridx\}_{\iteridx=0}^\infty$ a filtration, and let $\{\vnoise_\iteridx\}_{\iteridx\in\N}$ denote a sequence of $\outH$-valued random variables adapted to $\{\filtration_\iteridx\}_{\iteridx=1}^\infty$. Then $\vnoise_\iteridx$ is said to be conditionally $\vncov$-sub-Gaussian, given a bounded positive-semidefinite linear operator $\vncov$ on $\outH$, if:
    \begin{equation}
        \forall \iteridx\geq 1, \quad \forall \outfun \in \outH, \quad \expectation[\exp(\inner{\outfun, \vnoise_\iteridx}) \mid \filtration_{\iteridx-1}] \leq \exp\left(\frac{1}{2}\inner{\outfun, \vncov \outfun} \right) \as.
        \label{eq:v-sub-g-op}
    \end{equation}
\end{definition}
Note that this definition encompasses the common characterization of sub-Gaussianity with respect to a scalar parameter $\sigma_\vnoise^2 > 0$ \citep{Abbasi-Yadkori2011, Chowdhury2017}, i.e.:
\begin{equation}
    \forall \iteridx\geq 1, \quad \forall \outfun \in \outH, \quad \expectation[\exp(\inner{\outfun, \vnoise_\iteridx}) \mid \filtration_{\iteridx-1}] \leq \exp\left(\frac{1}{2}\norm{\outfun}^2\sigma_\vnoise^2\right) \as\,,
    \label{eq:v-sub-g}
\end{equation}
by setting $\Sigma := \sigma_\obsnoise^2 \idop$, so that $\sigma_\vnoise^2 := \norm{\vncov}_\opnorm$, as $\inner{\outfun, \vncov \outfun} \leq \norm{\outfun}^2\norm{\vncov}_\opnorm$. This definition is sometimes referred to as ``weakly'' sub-Gaussian, given that there is no guarantee that $\norm{\rv}_\outH < \infty$, though $\inner{\rv, \outvec}_\outH$ is $\sigma_\vnoise^2\norm{\outvec}_\outH^2$-sub-Gaussian for all $\outvec\in\outH$ \citep{Giorgobiani2020}. However, the definition of sub-Gaussianity w.r.t.\ an operator is more general \citep{Antonini1997}. As a basic consequence of the definition, it follows that, if $\rv$ is $\vncov$-sub-Gaussian, then $\anyoperator\rv$ is $\anyoperator\vncov\anyoperator^\adj$-sub-Gaussian, for $\anyoperator\in\lopspace(\outH,\inH)$ \citep{Mollenhauer2023}. Indeed, given any $\invec\in\inH$, by \autoref{def:sub-gaussian}, it holds that:
\begin{equation}
    \Ex{\exp(\inner{\anyoperator\rv, \invec}_\inH)} = \Ex{\exp(\inner{\rv, \anyoperator^\adj\invec}_\outH)} \leq \exp\paren*{\fhalf \inner{\anyoperator^\adj\invec, \vncov\anyoperator^\adj\invec}} = \exp\paren*{\fhalf \inner{\invec, \anyoperator\vncov\anyoperator^\adj\invec}}.
    \label{eq:sub-g-linear}
\end{equation}

\subsection{Existing results}
\begin{lemma}[{\citealp[Cor. 9.14]{Schmudgen2012}}]
    \label{thr:fredholm-finite}
    Let $\{\onb_n\}_{n=1}^\infty$ be an orthonormal basis of a separable Hilbert space $\Hspace$. Then,
    \begin{equation*}
        \det(\idop + \anyoperator) = \lim_{n \to \infty} \det\paren{[\delta_{ij} + \inner{\anyoperator\onb_i, \onb_j}]_{i,j=1}^n}, \quad \anyoperator \in \Schatten_1(\Hspace),
    \end{equation*}
    where $\delta_{ij}$ denotes the Kronecker delta.
\end{lemma}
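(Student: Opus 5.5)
The plan is to realize each finite determinant as the Fredholm determinant of a finite-rank compression of $\anyoperator$. We then show that these compressions converge to $\anyoperator$ in trace norm, and conclude with the continuity property \eqref{eq:fredholm-continuity}.

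\textbf{Step 1: the finite-rank compressions.} Let $\proj_n$ be the orthogonal projection onto $V_n:=\operatorname{span}\{\onb_1,\dotsc,\onb_n\}$ and set $\anyoperator_n:=\proj_n\anyoperator\proj_n$. This operator is finite rank, hence trace class. It maps $V_n$ into itself and vanishes on $V_n^\perp$.
\begin{itemize}
\item Every nonzero eigenvalue of $\anyoperator_n$ therefore has its eigenvectors in $V_n$. Its eigenvalues with multiplicity are those of the $n\times n$ matrix of $\anyoperator_n|_{V_n}$ in the basis $\onb_1,\dotsc,\onb_n$, padded with zeros.
\item The entries of that matrix are $\inner{\anyoperator\onb_j,\onb_i}$. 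Its transpose has entries $\inner{\anyoperator\onb_i,\onb_j}$, and transposition does not change a determinant.
\item By \autoref{def:fredholm-det},
\begin{equation*}
\det(\idop+\anyoperator_n)=\prod_{k}(1+\eigval_k(\anyoperator_n))=\det\paren{[\delta_{ij}+\inner{\anyoperator\onb_i,\onb_j}]_{i,j=1}^n}.
\end{equation*}
\end{itemize}
Multiplicities need care here. Algebraic multiplicities of nonzero eigenvalues on $\Hspace$ agree with those of the restriction to $V_n$, because $\anyoperator_n$ acts on $V_n$ and is zero on the complement. So the product over eigenvalues equals the characteristic-polynomial evaluation of the matrix.

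\textbf{Step 2: trace-norm convergence (the main step).} We need $\norm{\anyoperator-\anyoperator_n}_1\to0$. Split
\begin{equation*}
\anyoperator-\proj_n\anyoperator\proj_n=(\idop-\proj_n)\anyoperator+\proj_n\anyoperator(\idop-\proj_n).
\end{equation*}
Since $\norm{\proj_n}_\opnorm\leq1$, this gives
\begin{equation*}
\norm{\anyoperator-\anyoperator_n}_1\leq\norm{(\idop-\proj_n)\anyoperator}_1+\norm{\anyoperator(\idop-\proj_n)}_1.
\end{equation*}
To bound each term, write the singular value decomposition $\anyoperator=\sum_k s_k\, u_k\otimes v_k$ with $\sum_k s_k=\norm{\anyoperator}_1<\infty$ and unit vectors $u_k,v_k$. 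Then
\begin{equation*}
\norm{(\idop-\proj_n)\anyoperator}_1\leq\sum_k s_k\norm{(\idop-\proj_n)u_k},
\end{equation*}
and the analogous bound with $v_k$ holds for $\norm{\anyoperator(\idop-\proj_n)}_1$.

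Since $\{\onb_n\}$ is an orthonormal basis, $\proj_n\to\idop$ strongly, so each summand tends to zero. Each summand is also dominated by $s_k$, which is summable. Dominated convergence for series then gives convergence of both terms to $0$.

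I expect this step to be the only real obstacle. Strong convergence of $\proj_n$ alone does not give trace-norm convergence. It is the summable singular-value expansion that upgrades pointwise convergence to convergence in $\norm{\cdot}_1$.

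\textbf{Step 3: conclusion.} Combining Step 2 with the continuity property \eqref{eq:fredholm-continuity} gives $\det(\idop+\anyoperator_n)\to\det(\idop+\anyoperator)$. By Step 1, the left-hand side equals the $n\times n$ matrix determinant, which proves the claim.
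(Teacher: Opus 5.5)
Your proof is correct. There is no in-paper argument to compare it against: the paper does not prove this lemma and simply imports it from Schmüdgen (Cor.~9.14). Your proposal therefore gives a self-contained derivation where the paper relies on a citation, and it follows the standard route. You identify $\det\paren{[\delta_{ij}+\inner{\anyoperator\onb_i,\onb_j}]_{i,j=1}^n}$ with the Fredholm determinant of the finite-rank compression $\proj_n\anyoperator\proj_n$. Your check on algebraic multiplicities and the transpose is exactly what is needed, because the paper defines $\det(\idop+\anyoperator)$ as a product over eigenvalues. You then show $\norm{\anyoperator-\proj_n\anyoperator\proj_n}_1\to0$ using the summable singular-value expansion and dominated convergence, and conclude with \eqref{eq:fredholm-continuity}. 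Be aware that this reduces the lemma to the trace-norm continuity of the determinant, which is where the real depth lies under the eigenvalue-product definition. Proving that continuity requires Lidskii/Weyl-type results or the exterior-power representation of the determinant. Using it is legitimate here, since the paper also takes \eqref{eq:fredholm-continuity} from the literature. Still, your argument is a clean reduction to that cited fact rather than a proof from first principles.
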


\begin{lemma}[{Weyl-von Neumann theorem, \citealp{Kuroda1958}}]
    \label{thr:kuroda}
    Let $\anyoperator$ be a bounded, self-adjoint operator on a separable Hilbert space $\Hspace$. Then, for any $1< p < \infty$ and $\epsilon > 0$, there exists a self-adjoint operator $\anotheroperator_\epsilon$ such that $\| \anotheroperator_\epsilon \|_p \leq \epsilon$ and the self-adjoint operator $\anyoperator + \anotheroperator_\epsilon$ has pure point spectrum.
\end{lemma}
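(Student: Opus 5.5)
We prove the Weyl--von Neumann--Kuroda statement of \autoref{thr:kuroda} by the classical diagonalization scheme: repeatedly split off finite-dimensional subspaces that are almost invariant under $\anyoperator$, then discard the small off-diagonal couplings. The perturbation $\anotheroperator_\epsilon$ will be minus the sum of these couplings. Its Schatten $p$-norm is controlled by a geometric series, and $\anyoperator+\anotheroperator_\epsilon$ is block diagonal with finite-dimensional blocks whose ranges span $\Hspace$, hence has pure point spectrum.

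\textbf{Key lemma.} Let $\anyoperator$ be bounded self-adjoint, $x\in\Hspace$ and $\eta>0$. We claim there is a finite-rank orthogonal projection $\proj$ with $x\in\operatorname{Ran}\proj$ and $\norm{(\idop-\proj)\anyoperator\proj}_p\leq\eta$.

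To construct $\proj$, partition $[-\norm{\anyoperator}_\opnorm,\norm{\anyoperator}_\opnorm]$ into $n$ disjoint intervals $\Delta_k$, each of length $2\norm{\anyoperator}_\opnorm/n$ with midpoint $\eigval_k$. Let $E_k$ be the corresponding spectral projections of $\anyoperator$. Set $x_k:=E_kx/\norm{E_kx}$, dropping the zero terms, and let $\proj$ project onto $\operatorname{span}\{x_k\}$. The $x_k$ are orthonormal, and $x=\sum_kE_kx\in\operatorname{Ran}\proj$.

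By the spectral theorem, $\norm{(\anyoperator-\eigval_k)x_k}\leq\norm{\anyoperator}_\opnorm/n$. Since $(\idop-\proj)\eigval_kx_k=0$, it follows that $T:=(\idop-\proj)\anyoperator\proj$ has rank at most $n$ and satisfies $\norm{T x_k}\leq\norm{\anyoperator}_\opnorm/n$. Hence $\norm{T}_2\leq\norm{\anyoperator}_\opnorm n^{-1/2}$.

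We then compare Schatten norms using the rank bound. If $p\geq2$, then $\norm{T}_p\leq\norm{T}_2$. If $1<p<2$, then $\norm{T}_p\leq n^{1/p-1/2}\norm{T}_2$. In both cases $\norm{T}_p\leq\norm{\anyoperator}_\opnorm\, n^{\max(1/p,1/2)-1}$, which tends to $0$ as $n\to\infty$ precisely because $p>1$. This is the only place the hypothesis $p>1$ enters, and it is the step I would check most carefully.

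\textbf{Iteration.} Fix an orthonormal basis $\{\onb_j\}$ of $\Hspace$ and set $A_0:=\anyoperator$ and $\proj_0:=0$. Suppose finite-rank orthogonal projections $Q_1,\dots,Q_{j-1}$ have been built, with pairwise orthogonal ranges, with $A_{j-1}$ commuting with each of them, and with $\proj_{j-1}:=\sum_{i<j}Q_i$.

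At step $j$, apply the key lemma to the compression of $A_{j-1}$ to $(\operatorname{Ran}\proj_{j-1})^\perp$. This compression is self-adjoint there, because $A_{j-1}$ commutes with $\proj_{j-1}$. Use the vector $(\idop-\proj_{j-1})\onb_j$ and tolerance $\eta_j:=\epsilon2^{-j-1}$. This yields a finite-rank projection $Q_j\leq \idop-\proj_{j-1}$.

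Define
\[
A_j:=A_{j-1}-\bigl(Q_jA_{j-1}(\idop-\proj_j)+(\idop-\proj_j)A_{j-1}Q_j\bigr).
\]
The subtracted term is self-adjoint with $p$-norm at most $2\eta_j$, and $A_j$ commutes with $Q_1,\dots,Q_j$. By construction $\onb_j\in\operatorname{Ran}\proj_j$, so $\proj_j\to\idop$ strongly.

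\textbf{Conclusion.} Let $\anotheroperator_\epsilon:=\lim_j(A_j-\anyoperator)$. This series converges absolutely in $\Schatten_p$ because $\sum_j2\eta_j\leq\epsilon$, and $\anotheroperator_\epsilon$ is self-adjoint with $\norm{\anotheroperator_\epsilon}_p\leq\epsilon$.

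Later steps only modify $A$ on $(\operatorname{Ran}\proj_j)^\perp$. Therefore the limit $\anyoperator+\anotheroperator_\epsilon$ leaves each finite-dimensional $\operatorname{Ran}Q_j$ invariant. Since these ranges are orthogonal and span $\Hspace$, diagonalizing each block gives an orthonormal eigenbasis of $\Hspace$, so $\anyoperator+\anotheroperator_\epsilon$ has pure point spectrum.

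The main bookkeeping obstacle is confirming two things at each step: that $A_{j-1}$ commutes with $\proj_{j-1}$, so the compression is legitimate, and that the removed off-diagonal term is indeed bounded by $2\norm{(\idop-Q_j)A_{j-1}Q_j}_p$ on the relevant subspace.
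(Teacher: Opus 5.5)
The paper gives no proof of this lemma. It is imported from Kuroda (1958) as an existing result and used only as a black box in \autoref{thr:sn-general-pd}.

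Your argument is the classical Weyl--von Neumann--Kuroda block-diagonalization, and it is correct. The key lemma is sound: $\proj$ has rank at most $n$, and
$\norm{(\idop-\proj)\anyoperator\proj}_2^2=\sum_k\norm{(\idop-\proj)(\anyoperator-\eigval_k)x_k}^2\le\norm{\anyoperator}_\opnorm^2/n$.
The rank-based Schatten comparison then gives the decay $n^{\max(1/p,1/2)-1}$.

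The two bookkeeping points you flagged also check out. Write $S_j$ for the subtracted term.
\begin{itemize}
\item Norm bound: since $Q_j\le\idop-\proj_{j-1}$ and $\idop-\proj_j\le\idop-\proj_{j-1}$, the operator $(\idop-\proj_j)A_{j-1}Q_j$ is exactly the off-diagonal block of the compression. Hence $\norm{S_j}_p\le2\eta_j$.
\item Commutation: for $i<j$ you have $Q_iS_j=S_jQ_i=0$, and a direct computation gives $Q_jA_j=A_jQ_j=Q_jA_{j-1}Q_j$.
\end{itemize}
Commutation with each $Q_i$ survives the norm limit, and $\sum_iQ_i=\idop$ strongly. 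So the limit is block diagonal with finite-dimensional blocks exhausting $\Hspace$.

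Compared with the paper's bare citation, your route is self-contained and shows exactly where $p>1$ enters. That restriction is sharp. By the Kato--Rosenblum theorem, trace-class perturbations preserve the absolutely continuous part, so the multiplication operator $f(t)\mapsto tf(t)$ on $L^2[0,1]$ cannot be made pure point by any perturbation in $\Schatten_1$.

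It is also worth noticing how little of the theorem the paper actually uses. It needs only $\norm{\regop_\seqidx-\regop}_\opnorm\to0$ with $\regop_\seqidx$ diagonalizable and uniformly positive. The first step of your key lemma already delivers this. If $\regop\succeq\anyscalar_0\idop$ and you take the spectral partition inside $[\anyscalar_0,\norm{\regop}_\opnorm]$, then $\sum_k\eigval_kE_k$:
\begin{itemize}
\item is diagonalizable;
\item dominates $\anyscalar_0\idop$;
\item lies within $\norm{\regop}_\opnorm/n$ of $\regop$ in operator norm.
\end{itemize}
So the full Schatten-$p$ theorem is more than \autoref{thr:sn-general-pd} requires.
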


\begin{lemma}[{\citealp[Thm. 1]{Pena2009article}}]
    \label{thr:sn-pena}
    Let $\vec{\rv}$ be a random vector in $\R^\anydim$ and $\paramcov$ be a positive-definite random matrix in $\R^{\locdim\times\anydim}$ such that:
    \begin{equation}
        \forall \anyvec \in \R^\locdim, \quad \Ex{\exp\left(\anyvec^\transpose \vec\rv - \fhalf\anyvec^\transpose \paramcov \anyvec\right)} \leq 1.
    \end{equation}
    Then, for any given fixed positive-definite matrix $\regmat$,
    \begin{align}
        &\Ex{
            \sqrt{\frac{\det\regmat}{\det(\paramcov + \regmat)}}
            \exp
            \left(
                \fhalf \vec\rv^\transpose (\paramcov + \regmat)^{-1} \vec\rv
            \right)
        }
        \leq 1,\\
        &\Ex{\exp\left(\frac{1}{4}\vec\rv^\transpose (\paramcov + \regmat)^{-1}\vec\rv\right)} \leq \sqrt{\Ex{\sqrt{\det(\eye + \regmat^{-1}\paramcov}}}.
    \end{align}
\end{lemma}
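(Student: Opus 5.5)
We would prove \autoref{thr:sn-pena} by the method of mixtures. The idea is to average the supermartingale-type exponential $\exp(\anyvec^\transpose\vec\rv-\fhalf\anyvec^\transpose\paramcov\anyvec)$ over a Gaussian prior on $\anyvec$, so that the quadratic form $\vec\rv^\transpose(\paramcov+\regmat)^{-1}\vec\rv$ appears in closed form. The second inequality then follows from the first by Cauchy--Schwarz.

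\emph{Step 1 (mixing).} Define $M(\anyvec):=\exp(\anyvec^\transpose\vec\rv-\fhalf\anyvec^\transpose\paramcov\anyvec)$, so that $\Ex{M(\anyvec)}\leq 1$ for every fixed $\anyvec\in\R^\locdim$ by hypothesis. Let $\varphi$ be the density of $\mathcal{N}(0,\regmat^{-1})$, which is well defined because $\regmat$ is fixed and positive definite. The map $(\anyvec,\omega)\mapsto M(\anyvec)\varphi(\anyvec)$ is nonnegative and jointly measurable, provided $\vec\rv$ and $\paramcov$ are random variables. Tonelli's theorem therefore allows exchanging expectation and integral:
\begin{equation*}
\Ex{\int_{\R^\locdim} M(\anyvec)\varphi(\anyvec)\,d\anyvec}=\int_{\R^\locdim}\Ex{M(\anyvec)}\varphi(\anyvec)\,d\anyvec\leq 1 .
\end{equation*}
This step is where the care lies. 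The only subtlety is that $\paramcov$ is random, so the hypothesis must be read pointwise in $\anyvec$ with deterministic $\anyvec$, and the prior must not depend on $\paramcov$. This is why $\regmat$ must be fixed.

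\emph{Step 2 (Gaussian integral).} For fixed $\omega$, we complete the square in the exponent
\begin{equation*}
\anyvec^\transpose\vec\rv-\fhalf\anyvec^\transpose(\paramcov+\regmat)\anyvec
=\fhalf\vec\rv^\transpose(\paramcov+\regmat)^{-1}\vec\rv-\fhalf(\anyvec-\vec m)^\transpose(\paramcov+\regmat)(\anyvec-\vec m),
\end{equation*}
where $\vec m:=(\paramcov+\regmat)^{-1}\vec\rv$. The matrix $\paramcov+\regmat$ is positive definite, so the remaining Gaussian integral is finite. Combining the normalizing constants $(2\pi)^{-\locdim/2}(\det\regmat)^{1/2}$ and $(2\pi)^{\locdim/2}\det(\paramcov+\regmat)^{-1/2}$ gives
\begin{equation*}
\int_{\R^\locdim} M(\anyvec)\varphi(\anyvec)\,d\anyvec=\sqrt{\frac{\det\regmat}{\det(\paramcov+\regmat)}}\exp\paren*{\fhalf\vec\rv^\transpose(\paramcov+\regmat)^{-1}\vec\rv}.
\end{equation*}
Substituting this into Step 1 yields the first inequality.

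\emph{Step 3 (second inequality).} Write $q:=\vec\rv^\transpose(\paramcov+\regmat)^{-1}\vec\rv$ and $D:=\det(\eye+\regmat^{-1}\paramcov)=\det(\paramcov+\regmat)/\det\regmat$. Then
\begin{equation*}
e^{q/4}=\paren*{D^{-1/2}e^{q/2}}^{1/2}\,D^{1/4}.
\end{equation*}
Applying Cauchy--Schwarz gives
\begin{equation*}
\Ex{e^{q/4}}\leq\sqrt{\Ex{D^{-1/2}e^{q/2}}}\,\sqrt{\Ex{D^{1/2}}}.
\end{equation*}
The first factor is at most $1$ by Step 2, which leaves exactly $\sqrt{\Ex{\sqrt{\det(\eye+\regmat^{-1}\paramcov)}}}$.
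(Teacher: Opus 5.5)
Your proof is correct and is the standard method-of-mixtures argument. The paper does not reprove this lemma but imports it from de la Pe\~na et al.; that proof likewise integrates the hypothesis against the $\mathcal{N}(0,\regmat^{-1})$ density via Fubini--Tonelli (valid because $\regmat$ is nonrandom), completes the square, and obtains the second bound from the first by Cauchy--Schwarz. As a side remark, your Step 2 only needs $\paramcov+\regmat$ to be positive definite, so it already covers positive-semidefinite $\paramcov$; this would make the $\frac{1}{\seqidx}\eye$ perturbation and the extra Fatou step in the paper's proof of \autoref{thr:sn-general} unnecessary.
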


\paragraph{Linear algebra identities.} We will frequently use linear algebra identities which are well known for finite-dimensional matrices and extensible to the case of bounded linear operators under mild conditions regarding their inverses. Namely, we will use the following form of Woodbury's identity:
\begin{equation}
    (\anyoperator + \anotheroperator\anotheroperator^\adj)^{-1} = \anyoperator^{-1} - \anyoperator^{-1}\anotheroperator(\idop + \anotheroperator^\adj \anyoperator^{-1} \anotheroperator)^{-1}\anotheroperator^\adj\anyoperator^{-1},
    \label{eq:woodbury}
\end{equation}
which holds for any $\anotheroperator \in \lopspace(\inH,\outH)$ and boundedly invertible $\anyoperator \in \lopspace(\outH)$, where $\inH$ and $\outH$ are Hilbert spaces over the same scalar field. See \citet{Deng2011} for a general version. We will also use the following ``push-through'' identity \citep{Henderson1981pushthrough}, which is readily applicable to the linear operator case:
\begin{equation}
    (\anyoperator + \anotheroperator\anotheroperator^\adj)^{-1}\anotheroperator = \anyoperator^{-1}\anotheroperator (\idop + \anotheroperator^\adj\anyoperator^{-1}\anotheroperator)^{-1},
    \label{eq:push-through}
\end{equation}
under the same assumptions on $\anyoperator$ and $\anotheroperator$ as \autoref{eq:woodbury}. This identity \eqref{eq:push-through} can be easily verified by an application of Woodbury's identity, followed by simple linear algebra manipulations.

\section{Auxiliary results}
\label{app:aux}

We here present auxiliary results we derived to construct the critical arguments for the proofs of our main theoretical results. Some of these results, such as \autoref{thr:sn-general}'s extension of \citeauthor{Pena2009article}'s self-normalized concentration to Hilbert-valued noise, might be of independent interest.

\begin{theorem}[Self-normalized concentration in Hilbert spaces]
\label{thr:sn-general}
    Let $\Hspace$ be a separable real Hilbert space, $\rv$ be a $\Hspace$-valued random vector and $\vncov$ be a positive-semidefinite, trace-class random operator on $\Hspace$ such that:
    \begin{equation}
        \forall \anyfunction\in\Hspace, \quad \Ex{\exp\left(\inner{\anyfunction, \rv} - \fhalf \inner{\anyfunction, \vncov \anyfunction}\right)} \leq 1.
        \label{eq:sn-sg-condition}
    \end{equation}
    Consider a fixed, boundedly invertible, positive-definite diagonalizable \footnote{Recall that an operator $\anyoperator\in\lopspace(\Hspace)$ is diagonalizable if there exists an orthonormal basis $\{\eigvec_\seqidx\}_{\seqidx=1}^\infty$ of $\Hspace$ and a scalar sequence $\seq\eigval$ such that $\anyoperator = \sum_{\seqidx=1}^\infty \eigval_\seqidx \eigvec_\seqidx \otimes \eigvec_\seqidx$, where the series converges in the strong operator topology.}
    operator $\regop:\Hspace\to\Hspace$, i.e., there exists an orthonormal basis $\{\onb_i\}_{i=1}^\infty$ of $\Hspace$ such that $\regop \onb_i = \eigval_i \onb_i$, for some $\eigval_i > 0$, for all $i\in\N$, and $\inf_{i\in\N}\eigval_i > 0$. It then holds that:
    \begin{equation}
        \Ex{
            \frac{1}{\sqrt{\det(\idop + \regop^{-1} \vncov)}}
            \exp
            \left(
                \fhalf \norm{\rv}_{(\vncov + \regop)^{-1}}^2
            \right)
        }
        \leq 1.
    \end{equation}
\end{theorem}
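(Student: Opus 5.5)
The plan is to reduce the statement to the finite-dimensional result of \autoref{thr:sn-pena} by projecting onto the eigenbasis $\{\onb_i\}$ of $\regop$, and then pass to the limit with Fatou's lemma.

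\emph{Finite-dimensional bound.} For $n\in\N$, let $\proj_n$ be the orthogonal projection onto $\operatorname{span}\{\onb_1,\dotsc,\onb_n\}$, identified with $\R^n$ through the coordinates $\inner{\onb_i,\cdot}$. Set $\vec\rv_n:=(\inner{\onb_i,\rv})_{i=1}^n$, $\paramcov_n:=[\inner{\onb_i,\vncov\onb_j}]_{i,j=1}^n$ and $\regmat_n:=\operatorname{diag}(\eigval_1,\dotsc,\eigval_n)$. Taking $\anyfunction=\sum_{i\le n}\anyvecelem_i\onb_i$ in \eqref{eq:sn-sg-condition} gives the hypothesis of \autoref{thr:sn-pena} for every $\anyvec\in\R^n$. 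Since $\paramcov_n$ is only positive semidefinite, I add a fixed diagonal $\regmat_n$ inside the argument: \autoref{thr:sn-pena} is really a mixture argument that only needs $\paramcov+\regmat\succ0$, so this is harmless. If needed, I would instead apply it with $\paramcov_n+\epsilon\eye$, noting that the hypothesis is preserved since it only weakens the exponential, and then let $\epsilon\downarrow0$ by monotone convergence. Because $\regmat_n$ is diagonal with the $\eigval_i$,
\[
\frac{\det\regmat_n}{\det(\paramcov_n+\regmat_n)}=\det\paren{\eye+\regmat_n^{-1}\paramcov_n}^{-1}=\det\paren{\eye+\regmat_n^{-1/2}\paramcov_n\regmat_n^{-1/2}}^{-1}.
\]
This yields
\[
\Ex{\det\paren{\eye+\regmat_n^{-1/2}\paramcov_n\regmat_n^{-1/2}}^{-1/2}\exp\paren*{\fhalf\vec\rv_n^\transpose(\paramcov_n+\regmat_n)^{-1}\vec\rv_n}}\le1 .
\]

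\emph{Pointwise convergence.} Fix an outcome. First, $\regmat_n^{-1/2}\paramcov_n\regmat_n^{-1/2}$ is the compression $\proj_n\regop^{-1/2}\vncov\regop^{-1/2}\proj_n$, where $\regop^{-1/2}$ commutes with $\proj_n$. The operator $\anyoperator:=\regop^{-1/2}\vncov\regop^{-1/2}$ is trace class, since $\regop^{-1/2}$ is bounded. Hence $\proj_n\anyoperator\proj_n\to\anyoperator$ in trace norm, the standard fact that compressions of a trace-class operator converge in $\norm{\cdot}_1$. By \eqref{eq:fredholm-continuity} (equivalently \autoref{thr:fredholm-finite}) and Sylvester's identity \eqref{eq:fredholm-sylvester}, the determinant converges to $\det(\idop+\anyoperator)=\det(\idop+\regop^{-1}\vncov)$.

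Second, the quadratic form satisfies
\[
\vec\rv_n^\transpose(\paramcov_n+\regmat_n)^{-1}\vec\rv_n=\norm{\proj_n\rv}^2_{(\proj_n(\vncov+\regop)\proj_n)^{-1}},
\]
with the inverse taken on $\operatorname{Ran}\proj_n$. For a positive definite operator $\anotheroperator=\vncov+\regop\succeq(\inf_i\eigval_i)\idop$, we have the variational formula
\[
\norm{\proj_n\rv}^2_{(\proj_n\anotheroperator\proj_n)^{-1}}=\sup_{\anyfunction\in\operatorname{Ran}\proj_n}\paren{2\inner{\anyfunction,\rv}-\inner{\anyfunction,\anotheroperator\anyfunction}} .
\]
The right-hand side is nondecreasing in $n$. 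Its limit equals the supremum over the dense set $\bigcup_n\operatorname{Ran}\proj_n$, which by continuity of $\anyfunction\mapsto2\inner{\anyfunction,\rv}-\inner{\anyfunction,\anotheroperator\anyfunction}$ (as $\anotheroperator$ is bounded) is the supremum over $\Hspace$, namely $\norm{\rv}^2_{\anotheroperator^{-1}}$. Note that $\anotheroperator^{-1}$ is bounded since $\anotheroperator\succeq(\inf_i\eigval_i)\idop$.

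\emph{Conclusion and obstacle.} The integrands are nonnegative and converge pointwise, so Fatou's lemma gives the claim. I expect the main technical care to lie in two places. The first is the trace-norm convergence of the compressions. I would prove it via $\norm{\anyoperator-\proj_n\anyoperator\proj_n}_1\le\norm{(\idop-\proj_n)\anyoperator}_1+\norm{\anyoperator(\idop-\proj_n)}_1$, writing $\anyoperator=\anyoperator^{1/2}\anyoperator^{1/2}$ and using Hölder \eqref{eq:schatten-holder} with $\norm{(\idop-\proj_n)\anyoperator^{1/2}}_2\to0$. The second is the measurability and positive-semidefinite-only issue in applying \autoref{thr:sn-pena}, which the $\epsilon$-regularization handles. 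The monotone variational argument for the quadratic form is the key step, because it avoids needing any norm convergence of inverses.
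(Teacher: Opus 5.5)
Your proposal is correct and follows the same route as the paper. You compress onto the eigenbasis of $\regop$, apply \autoref{thr:sn-pena} to $\paramcov_n+\epsilon\eye$ and remove $\epsilon$ by a limit, establish pointwise convergence of the determinant and the quadratic form, and conclude with Fatou's lemma. The only differences are in the two limit steps, and both of your variants are valid:
\begin{itemize}
\item For the determinant, you use trace-norm convergence of the symmetrized compressions $\proj_n\regop^{-1/2}\vncov\regop^{-1/2}\proj_n$ together with Sylvester's identity, whereas the paper cites \autoref{thr:fredholm-finite} directly for $\regop^{-1}\vncov$.
\item For the quadratic form, you use a monotone variational formula, whereas the paper shows SOT convergence of $\basisop_d\mat{\regext}_d^{-1}\basisop_d^\adj$ to $(\vncov+\regop)^{-1}$.
\end{itemize}
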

\begin{proof}
    We will start the proof with a restriction to the finite-dimensional case via orthogonal projections. This reduction will allow us to apply \autoref{thr:sn-pena} to finite-dimensional subspaces of $\Hspace$. We will then lift it back up to the infinite-dimensional $\Hspace$  by taking appropriate limits of the operators and functions involving them.

    We now construct the finite-dimensional restriction. Given the orthonormal basis $\{\onb_i\}_{i=1}^\infty$ of $\Hspace$ that diagonalizes $\regop$, let $\basisop_\anydim := [\onb_1, \dots, \onb_\anydim]$ represent the operator mapping $\anyvec = [\anyvecelem_i]_{i=1}^\anydim\in\R^\anydim$ to $\basisop_\anydim \anyvec = \sum_{i=1}^\anydim \anyvecelem\onb_i \in \Hspace$, for any given $\anydim\in\N$. Similarly, its adjoint $\basisop_\anydim^\adj$ maps $\anyfunction\in\Hspace$ to the vector $\basisop_\anydim^\adj\anyfunction = \vec\anyfunction_\anydim := [\inner{\anyfunction,\onb_i}]_{i=1}^\anydim\in\R^\anydim$. Combining the two, we have that $\proj_\anydim := \basisop_\anydim \basisop_\anydim^\adj$ forms a finite-rank orthogonal projection onto a $\anydim$-dimensional subspace $\Hspace_\anydim = \proj_\anydim(\Hspace) \subset \Hspace$. Note that, for any $\anyfunction\in\Hspace_\anydim$, we have $\proj_\anydim\anyfunction = \anyfunction$. As \autoref{eq:sn-sg-condition} holds over all $\Hspace$, it also holds over $\Hspace_\anydim$, and we have that:
    \begin{equation}
        \begin{split}
            \forall \anyfunction \in\Hspace_\anydim, \quad 1
            &\geq \Ex{\exp\left(\inner{\anyfunction, \rv} - \fhalf \inner{\anyfunction, \vncov \anyfunction}\right)}\\
            &= \Ex{\exp\left(\inner{\proj_\anydim\anyfunction, \rv} - \fhalf \inner{\proj_\anydim\anyfunction, \vncov \proj_\anydim\anyfunction}\right)}\\
            &= \Ex{\exp\left(\inner{\basisop_\anydim^\adj\anyfunction, \basisop_\anydim^\adj\rv} - \fhalf \inner{\basisop_\anydim^*\anyfunction, \basisop_\anydim^*\vncov_\anydim \basisop_\anydim\basisop_\anydim^*\anyfunction}\right)}\\
            &= \Ex{\exp\left(\vec\anyfunction_\anydim^\transpose \vec\rv_\anydim - \fhalf \vec\anyfunction_\anydim^\transpose\mat\vncov_\anydim \vec\anyfunction_\anydim\right)},
        \end{split}
    \end{equation}
    where $\vec\anyfunction_\anydim := \basisop_\anydim^\adj\anyfunction \in \R^\anydim$, $\vec\rv_\anydim := \basisop_\anydim^\adj\rv \in \R^\anydim$, and $\mat\vncov_\anydim := \basisop_\anydim^\adj \vncov \basisop_\anydim \in \R^{\anydim\times\anydim}$, noting that the inner product in the third line is the Euclidean inner product in $\R^\anydim$. Since the inequality above holds for any $\anyfunction_\anydim \in \Hspace_\anydim$, we have that:
    \begin{equation}
        \forall \anyvec \in \R^\anydim, \quad \Ex{\exp\left(\anyvec^\transpose\vec\rv_\anydim - \fhalf \anyvec^\transpose \mat\vncov_\anydim \anyvec \right)} \leq 1.
    \end{equation}
    Furthermore, $\mat\vncov_{\anydim, \seqidx} := \mat\vncov_\anydim + \frac{1}{\seqidx}\eye \succ \mat\vncov_\anydim$ is almost surely positive definite, for $\seqidx\in\N$, and:
    \begin{equation}
        \forall \anyvec \in \R^\anydim, \quad
        \Ex{\exp\left(\anyvec^\transpose\vec\rv_\anydim - \fhalf \anyvec^\transpose \mat\vncov_{\anydim,\seqidx} \anyvec \right)}
        \leq
        \Ex{\exp\left(\anyvec^\transpose\vec\rv_\anydim - \fhalf \anyvec^\transpose \mat\vncov_\anydim \anyvec \right)}
        \leq 1.
    \end{equation}
    Therefore, by \autoref{thr:sn-pena}, setting $\regmat_\anydim := \basisop_\anydim^\adj \regop \basisop_\anydim = [\inner{\onb_i, \regop\onb_j}]_{i,j=1}^\anydim \in \R^{\anydim\times\anydim}$, it holds that:
    \begin{equation}
        \Ex{
            \sqrt{\frac{\det\regmat_\anydim}{\det(\mat\vncov_{\anydim,\seqidx} + \regmat_\anydim)}}
            \exp
            \left(
                \fhalf \vec\rv_\anydim^\transpose (\mat\vncov_{\anydim,\seqidx} + \regmat_\anydim)^{-1} \vec\rv_\anydim
            \right)
        }
        \leq 1, \qquad \forall\seqidx\in\N.
    \end{equation}
    Now, letting $\seqidx \to \infty$, by Fatou's lemma \citep[Thm. 1.6.5]{Durrett2019}, we get:
    \begin{equation}
        \begin{split}
            1 &\geq
            \liminf_{\seqidx \to \infty} \Ex{
                \sqrt{\frac{\det\regmat_\anydim}{\det(\mat\vncov_{\anydim,\seqidx} + \regmat_\anydim)}}
                \exp
                \left(
                    \fhalf \vec\rv_\anydim^\transpose (\mat\vncov_{\anydim,\seqidx} + \regmat_\anydim)^{-1} \vec\rv_\anydim
                \right)
            }\\
            &\geq
            \Ex{
                \liminf_{\seqidx \to \infty}
                \sqrt{\frac{\det\regmat_\anydim}{\det(\mat\vncov_{\anydim,\seqidx} + \regmat_\anydim)}}
                \exp
                \left(
                    \fhalf \vec\rv_\anydim^\transpose (\mat\vncov_{\anydim,\seqidx} + \regmat_\anydim)^{-1} \vec\rv_\anydim
                \right)
            }\\
            &= \Ex{
                \sqrt{\frac{\det\regmat_\anydim}{\det(\mat\vncov_\anydim + \regmat_\anydim)}}
                \exp
                \left(
                    \fhalf \vec\rv_\anydim^\transpose (\mat\vncov_\anydim + \regmat_\anydim)^{-1} \vec\rv_\anydim
                \right)
            }.
        \end{split}
        \label{eq:sn-finite-dim}
    \end{equation}

    As the result above holds for arbitrary $\anydim\in\N$, to extend it to the whole of $\Hspace$, we can take the limit as $\anydim\to\infty$ and then apply Fatou's lemma. Firstly, we need to derive the limits of the expressions involving operators. As $\anydim\to\infty$, one can easily verify that $\proj_\anydim$ converges to the identity $\idop$ in the strong operator topology (SOT), i.e., $\norm{\anyfunction - \proj_\anydim \anyfunction} \to 0$, for all $\anyfunction\in\Hspace$. Recalling the definition of a diagonal operator, we know that $\regop = \sum_{i=1}^\infty \eigval_i \onb_i \otimes \onb_i$, where the series converges in SOT, and $\{\eigval_i\}_{i=1}^\infty$ is the sequence of eigenvalues of $\regop$, including their multiplicities, which we assume to be sorted in non-increasing order, i.e., $\eigval_1 \geq \eigval_2 \geq \eigval_3 \geq \dots$. It then follows that:
    \begin{align}
        \regmat_\anydim &= \basisop_\anydim^\adj \regop \basisop_\anydim = \basisop_\anydim^\adj \left(\sum_{i=1}^\infty \eigval_i \onb_i \otimes \onb_i \right) \basisop_\anydim = \sum_{i=1}^\anydim \eigval_i \vec\onb_i \otimes \vec\onb_i\\
        \regmat_\anydim^{-1} &= \sum_{i=1}^\anydim \eigval_i^{-1} \vec\onb_i \otimes \vec\onb_i = \basisop_\anydim^\adj \left(\sum_{i=1}^\infty \eigval_i^{-1} \onb_i \otimes \onb_i \right) \basisop_\anydim = \basisop_\anydim^\adj \regop^{-1} \basisop_\anydim,
    \end{align}
    where $\vec\onb_i = \basisop_\anydim^\adj \onb_i \in \R^\anydim$ is the $i$th standard basis vector of $\R^\anydim$. For the determinant, we then have that:
    \begin{equation}
        \begin{split}
            \frac{\det\regmat_\anydim}{\det(\mat\vncov_\anydim + \regmat_\anydim)}
            &= \frac{1}{\det(\eye + \regmat_\anydim^{-1}\mat\vncov_\anydim)}\\
            &= \frac{1}{\det(\eye + \basisop_\anydim^\adj \regop^{-1} \basisop_\anydim\basisop_\anydim^\adj\vncov\basisop_\anydim)}\\
            &= \frac{1}{\det(\eye + \basisop_\anydim^\adj\basisop_\anydim\basisop_\anydim^\adj \regop^{-1} \vncov\basisop_\anydim)}\\
            &= \frac{1}{\det(\eye + \basisop_\anydim^\adj \regop^{-1} \vncov\basisop_\anydim)}\\
        \end{split}
    \end{equation}
    using the identities $\det (\mat{A} \mat{B}) = \det (\mat{A}) \det (\mat{B})$, for square matrices $\mat{A}$ and $\mat{B}$,
    $\basisop_\anydim\basisop_\anydim^\adj = \proj_\anydim$ and $\basisop_\anydim^\adj \basisop_\anydim = \eye$, and lastly
    $\regop^{-1}\proj_\anydim = \proj_\anydim \regop^{-1}$, since $\proj_\anydim$ and $\regop$ commute, given that $\proj_\anydim$ is formed by eigenvectors of $\regop$.  As $\vncov$ is trace-class and $\regop^{-1}$ is bounded, since $\inf_{i\in\N} \eigval_i > 0$, we also have that $\regop^{-1}\vncov$ is trace-class, so that the Fredholm determinant $\det(\idop + \regop^{-1}\vncov)$ and its approximations with finite-rank projections $\det(\eye + \basisop_\anydim^\adj \regop^{-1} \vncov \basisop_\anydim) = \det([\delta_{ij} + \inner{\regop^{-1} \vncov \onb_i, \onb_j}]_{i,j=1}^\anydim)$ are well defined. Therefore, by \autoref{thr:fredholm-finite}, it follows that:
    \begin{equation}
        \lim_{\anydim\to\infty} \det(\eye + \basisop_\anydim^\adj \regop^{-1} \vncov \basisop_\anydim) = \det(\idop + \regop^{-1} \vncov),
        \label{eq:sn-det-lim}
    \end{equation}
    which holds almost surely with respect to the randomness of $\vncov$.

    For the quadratic term,
    \begin{equation}
        \begin{split}
            \vec\rv_\anydim^\transpose (\mat\vncov_\anydim + \regmat_\anydim)^{-1} \vec\rv_\anydim
            &= \inner{\vec\rv_\anydim, (\mat\vncov_\anydim + \regmat_\anydim)^{-1} \vec\rv_\anydim}\\
            &= \inner{\basisop_\anydim^\adj\rv, (\mat\vncov_\anydim + \regmat_\anydim)^{-1} \basisop_\anydim^\adj\rv}\\
            &= \inner{\rv, \basisop_\anydim(\mat\vncov_\anydim + \regmat_\anydim)^{-1} \basisop_\anydim^\adj\rv}.
        \end{split}
    \end{equation}
    Letting $\regext := \vncov + \regop$ and $\mat{\regext}_\anydim := \mat\vncov_\anydim + \regmat_\anydim = \basisop_\anydim^\adj \regext\basisop_\anydim$, we have that:
    \begin{equation}
        \begin{split}
            \forall \anyfunction \in\Hspace, \quad \basisop_\anydim(\mat\vncov_\anydim + \regmat_\anydim)^{-1} \basisop_\anydim^\adj\regext\anyfunction
            &= \basisop_\anydim\mat{\regext}_\anydim^{-1} \basisop_\anydim^\adj \regext\anyfunction\\
            &= \basisop_\anydim\mat{\regext}_\anydim^{-1} \basisop_\anydim^\adj \regext\proj_\anydim\anyfunction + \basisop_\anydim\mat{\regext}_\anydim^{-1} \basisop_\anydim^\adj \regext(\idop - \proj_\anydim)\anyfunction\\
            &= \basisop_\anydim(\basisop_\anydim^\adj \regext\basisop_\anydim)^{-1} \basisop_\anydim^\adj \regext\basisop_\anydim\basisop_\anydim^\adj\anyfunction + \basisop_\anydim\mat{\regext}_\anydim^{-1} \basisop_\anydim^\adj \regext(\idop - \proj_\anydim)\anyfunction\\
            &= \proj_\anydim \anyfunction + \basisop_\anydim\mat{\regext}_\anydim^{-1} \basisop_\anydim^\adj \regext(\idop - \proj_\anydim)\anyfunction.
        \end{split}
        \label{eq:sn-quadratic-decomp}
    \end{equation}
    As $\anydim\to\infty$, $\proj_\anydim \to \idop$ in SOT, so that $\proj_\anydim\anyfunction \to \anyfunction$ and $(\idop - \proj_\anydim)\anyfunction \to 0$, for all $\anyfunction\in\Hspace$. Additionally, as $\mat\regext_\anydim = \mat\vncov_\anydim + \regmat_\anydim \succeq  \regmat_\anydim$, we have that $\norm{\mat{\regext}_\anydim^{-1}} \leq \norm{\regmat_\anydim^{-1}} = \eigval_\anydim^{-1} \leq \frac{1}{\inf_{i\in\N}\eigval_i} < \infty$, so that $\basisop_\anydim\mat{\regext}_\anydim^{-1} \basisop_\anydim^\adj \regext$ is uniformly bounded over all $\anydim\in\N$, as $\inf_{i\in\N}\eigval_i > 0$. Therefore, the identity in \autoref{eq:sn-quadratic-decomp} leads us to:
    \begin{equation}
        \forall\anyfunction\in\Hspace, \quad \lim_{\anydim\to\infty} \basisop_\anydim\mat{\regext}_\anydim^{-1} \basisop_\anydim^\adj\regext\anyfunction = \anyfunction.
    \end{equation}
    We can then conclude that $\basisop_\anydim\mat{\regext}_\anydim^{-1} \basisop_\anydim^\adj$ converges to $\regext^{-1}$ in SOT, which holds over all of $\Hspace$ for the inverse of $\regext$ is bounded. Hence, it almost surely holds that:
    \begin{equation}
        \lim_{\anydim\to\infty} \vec\rv_\anydim^\transpose (\mat\vncov_\anydim + \regmat_\anydim)^{-1} \vec\rv_\anydim = \inner{\rv, \regext^{-1}\rv} = \inner{\rv, (\vncov + \regop)^{-1}\rv}.
        \label{eq:sn-quadratic-lim}
    \end{equation}
    Applying the limits in \autoref{eq:sn-det-lim} and \ref{eq:sn-quadratic-lim} to \autoref{eq:sn-finite-dim}, we obtain:
    \begin{equation}
        \begin{split}
            1
            &\geq
            \Ex{
                \lim_{\anydim\to\infty}
                \sqrt{\frac{\det\regmat_\anydim}{\det(\mat\vncov_\anydim + \regmat_\anydim)}}
                \exp
                \left(
                    \fhalf \vec\rv_\anydim^\transpose (\mat\vncov_\anydim + \regmat_\anydim)^{-1} \vec\rv_\anydim
                \right)
            }\\
            &=
            \Ex{
                \sqrt{\frac{1}{\det(\idop + \regop^{-1} \vncov)}}
                \exp
                \left(
                    \fhalf \inner{\rv, (\vncov + \regop)^{-1}\rv}
                \right)
            },
        \end{split}
    \end{equation}
    where the inequality holds by Fatou's lemma \citep{Durrett2019}, given the non-negativity of the integrand inside the expectation, which concludes the proof.
\end{proof}

\begin{corollary}[Extension to general regularizers]
    \label{thr:sn-general-pd}
    The result in \autoref{thr:sn-general} continues to hold if we let $\regop$ be any boundedly invertible positive-definite operator on $\Hspace$.
\end{corollary}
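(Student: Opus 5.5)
Rather than rebuild the finite-rank approximation for a non-diagonalizable $\regop$, I will reduce the corollary to \autoref{thr:sn-general} by a change of variables that turns $\regop$ into the identity. Since $\regop$ is bounded, positive definite and boundedly invertible, it has a bounded, positive, boundedly invertible square root $\regop^{1/2}$ (continuous functional calculus), with $\regop^{-1/2}=(\regop^{1/2})^{-1}$ also bounded. Define the whitened quantities $\tilde\rv:=\regop^{-1/2}\rv$ and $\tilde\vncov:=\regop^{-1/2}\vncov\regop^{-1/2}$. By the linear-transformation property \eqref{eq:sub-g-linear}, applied in its unconditional form to \eqref{eq:sn-sg-condition} with test vector $\regop^{-1/2}\anyfunction$, for every $\anyfunction\in\Hspace$
\begin{equation*}
\Ex{\exp\paren*{\inner{\anyfunction,\tilde\rv}-\fhalf\inner{\anyfunction,\tilde\vncov\anyfunction}}}
=\Ex{\exp\paren*{\inner{\regop^{-1/2}\anyfunction,\rv}-\fhalf\inner{\regop^{-1/2}\anyfunction,\vncov\regop^{-1/2}\anyfunction}}}\leq1.
\end{equation*}
Moreover, $\tilde\vncov$ is positive semidefinite and trace class, because trace-class operators form an ideal in $\lopspace(\Hspace)$. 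Hence \autoref{thr:sn-general} applies to the pair $(\tilde\rv,\tilde\vncov)$ with regularizer $\idop$, which is trivially diagonalizable in any orthonormal basis, has all eigenvalues equal to $1$, and is boundedly invertible. This gives
\[
\Ex{\det(\idop+\tilde\vncov)^{-1/2}\exp\paren*{\fhalf\inner{\tilde\rv,(\tilde\vncov+\idop)^{-1}\tilde\rv}}}\leq1 .
\]

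It remains to show that the integrand is the one in the claim. \textbf{Quadratic form.} From $\tilde\vncov+\idop=\regop^{-1/2}(\vncov+\regop)\regop^{-1/2}$, each factor is boundedly invertible, so
\[
(\tilde\vncov+\idop)^{-1}=\regop^{1/2}(\vncov+\regop)^{-1}\regop^{1/2}.
\]
The bounded invertibility of $\vncov+\regop$ follows from $\vncov+\regop\succeq\regop\succeq c\idop$ with $c>0$. Substituting $\tilde\rv=\regop^{-1/2}\rv$ then gives
\[
\inner{\tilde\rv,(\tilde\vncov+\idop)^{-1}\tilde\rv}=\inner{\rv,(\vncov+\regop)^{-1}\rv}=\norm{\rv}^2_{(\vncov+\regop)^{-1}}.
\]
\textbf{Determinant.} By Sylvester's identity \eqref{eq:fredholm-sylvester} with $\anyoperator=\vncov\regop^{-1/2}\in\Schatten_1(\Hspace)$ and $\anotheroperator=\regop^{-1/2}\in\lopspace(\Hspace)$,
\[
\det(\idop+\regop^{-1/2}\vncov\regop^{-1/2})=\det(\idop+\regop^{-1}\vncov).
\]
Substituting both identities into the bound from \autoref{thr:sn-general} yields exactly the inequality of \autoref{thr:sn-general} for the general $\regop$.

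I expect the only delicate point to be measurability and the almost-sure validity of these manipulations when $\vncov$ is random. However, $\regop$ is fixed, so each identity holds pointwise for every realization of $\vncov$, and no limiting argument is needed beyond the one already inside \autoref{thr:sn-general}. A minor check is that Sylvester's identity is applied in the stated direction, with $\anyoperator$ trace class and $\anotheroperator$ bounded.
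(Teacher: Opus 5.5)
Your whitening argument is correct, and it takes a genuinely different route from the paper's proof. The paper keeps $U$ fixed and approximates it. Using the Weyl--von Neumann theorem in Kuroda's form (\autoref{thr:kuroda}), it constructs diagonalizable $U_n=U+A_{\epsilon_n}$ with $\norm{U_n-U}_{\mathrm{op}}\le\epsilon_n\to0$ and $U_n\succeq(a_0-\epsilon_n)I$, where $U\succeq a_0 I$. It applies \autoref{thr:sn-general} to each $U_n$ and passes to the limit using three ingredients: trace-norm convergence $U_n^{-1}\Sigma\to U^{-1}\Sigma$ (so the Fredholm determinants converge), operator-norm convergence of $(\Sigma+U_n)^{-1}$ inside the quadratic form, and Fatou's lemma.

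You instead make an exact change of variables. Conjugating by $U^{-1/2}$ reduces the claim to the case $U=I$, which is diagonalizable in any basis. So you need no approximation theorem and no second limiting argument, only the continuous functional calculus and Sylvester's identity \eqref{eq:fredholm-sylvester}, which the paper already uses. Your route also shows that the diagonalizability hypothesis in \autoref{thr:sn-general} is superfluous, since the $U=I$ case already implies the general one. It also gives the determinant directly in the form $\det(I+U^{-1/2}\Sigma U^{-1/2})$ that appears in \autoref{thr:sn-sequence}. The paper's route, conversely, lands on $\det(I+U^{-1}\Sigma)$ without any determinant identity, but at the price of invoking Kuroda's theorem.

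One small slip: with $A=\Sigma U^{-1/2}$ and $B=U^{-1/2}$, Sylvester's identity gives $\det(I+\Sigma U^{-1})=\det(I+U^{-1/2}\Sigma U^{-1/2})$, not $\det(I+U^{-1}\Sigma)$. Take instead $A=U^{-1/2}\Sigma\in\mathcal{S}_1(\mathcal{H})$ and $B=U^{-1/2}$, so that $AB=U^{-1/2}\Sigma U^{-1/2}$ and $BA=U^{-1}\Sigma$. Alternatively, note that $\Sigma U^{-1}=U(U^{-1}\Sigma)U^{-1}$ is similar to $U^{-1}\Sigma$ and so has the same Fredholm determinant.
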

\begin{proof}
    For this proof, we apply \citeauthor{Kuroda1958}'s \citeyearpar{Kuroda1958} result on the Weyl-von Neumann theorem (\autoref{thr:kuroda}) to construct a sequence of bounded self-adjoint operators $\regop_\seqidx$ with pure point spectrum, such that $\regop_\seqidx$ converges to $\regop$ as $\seqidx\to \infty$ and show that \autoref{thr:sn-general} holds for the limit. Indeed, by \autoref{thr:kuroda}, for any given $\epsilon > 0$ and $1 < p < \infty$, we can construct a diagonalizable operator $\regop + \anyoperator_\epsilon$, such that $\norm{\anyoperator_\epsilon}_\psnorm < \epsilon$. Consequently, we can build a sequence $\{\regop_\seqidx\}_{\seqidx=1}^\infty$ such that $\regop_\seqidx := \regop + \anyoperator_{\epsilon_\seqidx}$, with $\epsilon_\seqidx \to 0$  as $\seqidx\to\infty$, which converges to $\regop$ in operator norm as:
    \begin{equation}
        \norm{\regop_\seqidx - \regop}_\opnorm \leq \norm{\regop_\seqidx - \regop}_\psnorm \leq \norm{\anyoperator_{\epsilon_\seqidx}}_\psnorm \leq \epsilon_\seqidx \xrightarrow{\seqidx\to\infty} 0.
    \end{equation}
    Moreover, \autoref{thr:kuroda} ensures that $\regop_\seqidx$ is bounded, self-adjoint and has a pure point spectrum, so that it admits an orthonormal eigenbasis, i.e., $\regop_\seqidx$ is diagonalizable. To ensure that $\regop_\seqidx$ remains positive definite, knowing that bounded invertibility of a positive-definite $\regop$ guarantees that $\regop \succeq \anyscalar_0 \idop$, for some $\anyscalar_0 > 0$, we can choose $\epsilon_\seqidx$ such that $0 < \epsilon_\seqidx < \anyscalar_0$ (e.g., $\epsilon_\seqidx := \frac{\anyscalar_0}{2\seqidx}$), for all $\seqidx\in\N$, so that:
    \begin{equation}
        \regop_\seqidx = \regop + \anyoperator_{\epsilon_\seqidx} \succeq (\anyscalar_0 - \epsilon_\seqidx)\idop \succ 0 \implies \norm{\regop_\seqidx^{-1}}_\opnorm \leq \frac{1}{\anyscalar_0 - \epsilon_\seqidx} < \infty, \quad \forall\seqidx\in\N.
    \end{equation}
    Applying \autoref{thr:sn-general}, we then have that:
    \begin{equation}
        \forall \seqidx\in\N, \quad \Ex{
            \sqrt{\frac{1}{\det(\idop + \regop_\seqidx^{-1} \vncov)}}
            \exp
            \left(
                \fhalf \inner{\rv, (\vncov + \regop_\seqidx)^{-1}\rv}
            \right)
        } \leq 1.
        \label{eq:sn-vn}
    \end{equation}
    We now need to verify the limits of the determinant and the quadratic form. Firstly, the determinant $\det(\idop + \regop_\seqidx^{-1} \vncov )$ converges to $\det(\idop + \regop^{-1}\vncov)$ as $\regop_\seqidx^{-1}\vncov$ converges to $\regop^{-1}\vncov$ in trace norm \citep[Thm. 3.5]{Simon1977}. Indeed,
    \begin{equation}
        \begin{split}
            \norm{\regop^{-1}\vncov - \regop_\seqidx^{-1}\vncov}_1
            &= \norm{(\regop^{-1} - \regop_\seqidx^{-1})\vncov}_1\\
            &= \norm{\regop^{-1}(\regop_\seqidx - \regop)\regop_\seqidx^{-1}\vncov}_1\\
            &\leq \norm{\regop^{-1}(\regop_\seqidx - \regop)\regop_\seqidx^{-1}}_\opnorm\norm{\vncov}_1\\
            &\leq \norm{\regop^{-1}}_\opnorm\norm{\regop_\seqidx - \regop}_\opnorm\norm{\regop_\seqidx^{-1}}_\opnorm\norm{\vncov}_1\\
            &\leq \epsilon_\seqidx \norm{\regop^{-1}}_\opnorm\norm{\regop_\seqidx^{-1}}_\opnorm\norm{\vncov}_1 \xrightarrow{\seqidx\to \infty} 0,
        \end{split}
    \end{equation}
    using the fact that $\norm{\anyoperator\anotheroperator}_1 \leq \norm{\anyoperator}_\opnorm\norm{\anotheroperator}_1$ \citep[see, e.g.,][Eq. 1.9]{Simon1977}. Thus,
    \begin{equation}
        \lim_{\seqidx \to \infty} \det(\idop + \regop_\seqidx^{-1} \vncov ) = \det(\idop + \regop^{-1} \vncov),
        \label{eq:lim-vn-det}
    \end{equation}
    which holds almost surely. Secondly, the quadratic term is such that:
    \begin{equation}
        \begin{split}
            \abs{\inner{\rv, (\vncov + \regop)^{-1}\rv} - \inner{\rv, (\vncov + \regop_\seqidx)^{-1}\rv}}
            &= \abs{\inner{\rv, ((\vncov + \regop)^{-1} - (\vncov + \regop_\seqidx)^{-1})\rv}}\\
            &= \abs{\inner{\rv, (\vncov + \regop)^{-1}(\vncov + \regop_\seqidx -(\vncov + \regop))(\vncov + \regop_\seqidx)^{-1}\rv}}\\
            &\leq \norm{\rv} \norm{(\vncov + \regop)^{-1}(\regop_\seqidx - \regop)(\vncov + \regop_\seqidx)^{-1}\rv}\\
            &\leq \norm{\rv}^2 \norm{(\vncov + \regop)^{-1}}_\opnorm \norm{\regop_\seqidx - \regop}_\opnorm \norm{(\vncov + \regop_\seqidx)^{-1}}_\opnorm\\
            &\leq \norm{\rv}^2 \norm{\regop^{-1}}_\opnorm \norm{\regop_\seqidx - \regop}_\opnorm \norm{\regop_\seqidx^{-1}}_\opnorm\\
            &\leq \frac{\epsilon_\seqidx}{(\anyscalar_0 - \epsilon_\seqidx)^2} \norm{\rv}^2 \xrightarrow{\seqidx\to \infty} 0,
        \end{split}
    \end{equation}
    where we applied Cauchy-Schwarz to derive the first inequality and then the observation that $\norm{(\vncov + \regop)^{-1}}_\opnorm \leq \norm{\regop^{-1}}_\opnorm$, as $\vncov \succeq 0$, and likewise for $\norm{(\vncov + \regop_\seqidx)^{-1}}_\opnorm$. Therefore, the following almost surely holds:
    \begin{equation}
        \inner{\rv, (\vncov + \regop_\seqidx)^{-1}\rv} \xrightarrow{\seqidx\to\infty} \inner{\rv, (\vncov + \regop)^{-1}\rv}.
        \label{eq:lim-vn-quadratic}
    \end{equation}
    Combining \autoref{eq:lim-vn-det} and \ref{eq:lim-vn-quadratic} with \ref{eq:sn-vn}, the main result then arises by an application of Fatou's lemma \citep{Durrett2019}.
\end{proof}

\begin{lemma}[Loss difference under operator-valued observations]
\label{lem:loss-difference}
Let $\mdlspace$ be a real Hilbert space and, for each $i\in\range{\iteridx}$, let $\obsop_i:\obsspace_i\to\mdlspace$ be bounded and linear. Suppose that $\loss_i(\cdot,\observation_i)$ is twice Fr\'echet differentiable and $\alpha$-strongly convex for some $\alpha>0$, and that $R_\iteridx:\mdlspace\to\R$ is Fr\'echet differentiable and $\regfactor_\iteridx$-strongly convex for some $\regfactor_\iteridx>0$. Define $\Loss_\iteridx(\mdlop):=\sum_{i=1}^\iteridx\loss_i(\obsop_i^\adj(\mdlop),\observation_i)+R_\iteridx(\mdlop)$, and let $\widehat{\mdlop}_\iteridx$ minimize $\Loss_\iteridx$ over $\mdlspace$. Then, for every $\mdlop\in\mdlspace$,
\begin{equation}
    \frac{1}{2}\norm{\mdlop-\widehat{\mdlop}_\iteridx}_{\operator{H}_\iteridx}^2
    \leq
    \Loss_\iteridx(\mdlop)-\Loss_\iteridx(\widehat{\mdlop}_\iteridx)
    \leq
    \frac{1}{2}\norm{\nabla\Loss_\iteridx(\mdlop)}_{\operator{H}_\iteridx^{-1}}^2,
    \label{eq:loss-difference}
\end{equation}
where $\operator{H}_\iteridx:=\regfactor_\iteridx\idop+\alpha\sum_{i=1}^\iteridx\obsop_i\obsop_i^\adj$.
\end{lemma}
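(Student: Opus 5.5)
The plan is to use strong convexity of $\Loss_\iteridx$ relative to the quadratic form $\operator{H}_\iteridx$, in both the lower-bound and upper-bound directions. First I establish a quadratic lower bound. For any $\mdlop,\mdlop'\in\mdlspace$, $\alpha$-strong convexity of $\loss_i(\cdot,\observation_i)$ on $\obsspace_i$, composed with the linear map $\obsop_i^\adj$, gives
\[
\loss_i(\obsop_i^\adj\mdlop',\observation_i)\geq \loss_i(\obsop_i^\adj\mdlop,\observation_i)+\inner{\obsop_i\nabla_1\loss_i(\obsop_i^\adj\mdlop,\observation_i),\mdlop'-\mdlop}+\tfrac{\alpha}{2}\norm{\obsop_i^\adj(\mdlop'-\mdlop)}^2 .
\]
Here the chain rule identifies the gradient of $\mdlop\mapsto\loss_i(\obsop_i^\adj\mdlop,\observation_i)$ with $\obsop_i\nabla_1\loss_i$. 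Note that $\norm{\obsop_i^\adj g}^2=\inner{g,\obsop_i\obsop_i^\adj g}$. Adding the $\regfactor_\iteridx$-strong convexity inequality for $R_\iteridx$ and summing over $i$ yields
\[
\Loss_\iteridx(\mdlop')\geq \Loss_\iteridx(\mdlop)+\inner{\nabla\Loss_\iteridx(\mdlop),\mdlop'-\mdlop}+\tfrac12\norm{\mdlop'-\mdlop}_{\operator{H}_\iteridx}^2 \qquad (\star)
\]
for all $\mdlop,\mdlop'$. Twice differentiability is not really needed beyond this step; one could alternatively derive $(\star)$ from a Hessian bound $\nabla^2\Loss_\iteridx\succeq\operator{H}_\iteridx$ along segments.

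For the left inequality, I apply $(\star)$ with base point $\widehat{\mdlop}_\iteridx$ and $\mdlop'=\mdlop$. Since $\widehat{\mdlop}_\iteridx$ is a minimizer over the whole space $\mdlspace$ of a Fréchet differentiable function, $\nabla\Loss_\iteridx(\widehat{\mdlop}_\iteridx)=0$. The linear term therefore vanishes, giving $\Loss_\iteridx(\mdlop)-\Loss_\iteridx(\widehat{\mdlop}_\iteridx)\geq\frac12\norm{\mdlop-\widehat{\mdlop}_\iteridx}_{\operator{H}_\iteridx}^2$.

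For the right inequality, I apply $(\star)$ with base point $\mdlop$ and an arbitrary $\mdlop'$. Taking the infimum over $\mdlop'$ on both sides gives
\[
\Loss_\iteridx(\widehat{\mdlop}_\iteridx)=\inf_{\mdlop'}\Loss_\iteridx(\mdlop')\geq \Loss_\iteridx(\mdlop)+\inf_{g\in\mdlspace}\Big\{\inner{\nabla\Loss_\iteridx(\mdlop),g}+\tfrac12\norm{g}_{\operator{H}_\iteridx}^2\Big\}.
\]
Because $\operator{H}_\iteridx\succeq\regfactor_\iteridx\idop$ with $\regfactor_\iteridx>0$ and $\operator{H}_\iteridx$ is bounded (the $\obsop_i$ are bounded), $\operator{H}_\iteridx$ is boundedly invertible. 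The quadratic infimum is therefore attained at $g=-\operator{H}_\iteridx^{-1}\nabla\Loss_\iteridx(\mdlop)$, with value $-\frac12\norm{\nabla\Loss_\iteridx(\mdlop)}_{\operator{H}_\iteridx^{-1}}^2$. Rearranging gives the upper bound.

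The step needing the most care is the first one: correctly interpreting "$\alpha$-strong convexity" of $\loss_i$ on the Hilbert space $\obsspace_i$ and transferring it through $\obsop_i^\adj$ to produce exactly the operator $\alpha\obsop_i\obsop_i^\adj$ in $\operator{H}_\iteridx$. This also requires justifying that the Fréchet gradient of the composed map is $\obsop_i\nabla_1\loss_i$ under the Riesz identifications used in the paper. The other ingredients, the first-order optimality condition and the bounded invertibility of $\operator{H}_\iteridx$, are routine.
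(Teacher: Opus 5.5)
Your proposal is correct and follows the paper's proof step for step. You derive the $\operator{H}_\iteridx$-weighted strong convexity inequality by transferring each $\loss_i$'s $\alpha$-strong convexity through $\obsop_i^\adj$ and adding that of $R_\iteridx$, get the lower bound from $\nabla\Loss_\iteridx(\widehat{\mdlop}_\iteridx)=0$, and get the upper bound by minimizing the quadratic model, whose infimum is attained at $-\operator{H}_\iteridx^{-1}\nabla\Loss_\iteridx(\mdlop)$.
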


\begin{proof}
For any $\mdlop,\mdlop'\in\mdlspace$, the $\alpha$-strong convexity of $\loss_i(\cdot,\observation_i)$ gives
\begin{align}
    \loss_i(\obsop_i^\adj(\mdlop'),\observation_i)
    \geq{}&
    \loss_i(\obsop_i^\adj(\mdlop),\observation_i)
    +
    \inner{
        \nabla_1\loss_i(\obsop_i^\adj(\mdlop),\observation_i),
        \obsop_i^\adj(\mdlop'-\mdlop)
    }_{\obsspace_i}
    \nonumber\\
    &+
    \frac{\alpha}{2}
    \norm{\obsop_i^\adj(\mdlop'-\mdlop)}_{\obsspace_i}^2.
    \label{eq:loss-strong-convexity}
\end{align}
By the definition of the adjoint, the inner-product term in \eqref{eq:loss-strong-convexity} equals $\inner{\obsop_i\nabla_1\loss_i(\obsop_i^\adj(\mdlop),\observation_i),\mdlop'-\mdlop}_{\mdlspace}$, while its final term equals $\frac{\alpha}{2}\norm{\mdlop'-\mdlop}_{\obsop_i\obsop_i^\adj}^2$. Combining these inequalities with the $\regfactor_\iteridx$-strong convexity of $R_\iteridx$ yields
\begin{equation}
    \Loss_\iteridx(\mdlop')
    \geq
    \Loss_\iteridx(\mdlop)
    +
    \inner{\nabla\Loss_\iteridx(\mdlop),\mdlop'-\mdlop}_{\mdlspace}
    +
    \frac{1}{2}\norm{\mdlop'-\mdlop}_{\operator{H}_\iteridx}^2.
    \label{eq:lifted-loss-strong-convexity}
\end{equation}

Taking $\mdlop=\widehat{\mdlop}_\iteridx$ in \eqref{eq:lifted-loss-strong-convexity} and using the first-order optimality condition $\nabla\Loss_\iteridx(\widehat{\mdlop}_\iteridx)=0$ gives
\begin{equation}
    \Loss_\iteridx(\mdlop')-\Loss_\iteridx(\widehat{\mdlop}_\iteridx)
    \geq
    \frac{1}{2}\norm{\mdlop'-\widehat{\mdlop}_\iteridx}_{\operator{H}_\iteridx}^2,
\end{equation}
which proves the first inequality in \eqref{eq:loss-difference}.

For the second inequality, fix $\mdlop\in\mdlspace$ and take the infimum over $\mdlop'\in\mdlspace$ in \eqref{eq:lifted-loss-strong-convexity}. Writing $\Delta:=\mdlop'-\mdlop$ and using the optimality of $\widehat{\mdlop}_\iteridx$ gives
\begin{align}
    \Loss_\iteridx(\widehat{\mdlop}_\iteridx)
    &\geq
    \Loss_\iteridx(\mdlop)
    +
    \inf_{\Delta\in\mdlspace}
    \left\{
        \inner{\nabla\Loss_\iteridx(\mdlop),\Delta}_{\mdlspace}
        +
        \frac{1}{2}\norm{\Delta}_{\operator{H}_\iteridx}^2
    \right\}
    \nonumber\\
    &=
    \Loss_\iteridx(\mdlop)
    -
    \frac{1}{2}
    \norm{\nabla\Loss_\iteridx(\mdlop)}_{\operator{H}_\iteridx^{-1}}^2.
\end{align}
Indeed, $\operator{H}_\iteridx\succeq\regfactor_\iteridx\idop$ is boundedly invertible, and the infimum is attained at $\Delta=-\operator{H}_\iteridx^{-1}\nabla\Loss_\iteridx(\mdlop)$. Rearranging completes the proof.
\end{proof}

\section{Proofs of main theorems}
\label{app:main-proofs}
This section presents our proofs for the theoretical results presented in the main paper.

\subsection{Proof of \autoref{thr:sn-sequence}}
\begin{proof}
    The proof follows the standard stopping-time argument for self-normalized concentration \citep[e.g.,][]{Abbasi-Yadkori2012,Chugg2025variational}. For each $\anyfunction\in\Hspace$, define $\martingale_0^\anyfunction:=1$ and
    \begin{equation*}
        \martingale_\iteridx^\anyfunction
        :=
        \exp\paren*{
            \inner*{\anyfunction,\sum_{i=1}^\iteridx\rv_i}_\Hspace
            -
            \frac{1}{2}
            \inner{\anyfunction,\sumcov_\iteridx\anyfunction}_\Hspace
        },
        \qquad \iteridx\in\N.
    \end{equation*}
    Since $\sumcov_\iteridx=\sumcov_{\iteridx-1}+\vncov_\iteridx$, conditional sub-Gaussianity gives
    \begin{equation*}
        \begin{split}
            \Ex{\martingale_\iteridx^\anyfunction\given\filtration_{\iteridx-1}}
            &=
            \martingale_{\iteridx-1}^\anyfunction
            \Ex{
                \exp\paren*{
                    \inner{\anyfunction,\rv_\iteridx}_\Hspace
                    -
                    \frac{1}{2}
                    \inner{\anyfunction,\vncov_\iteridx\anyfunction}_\Hspace
                }
                \given\filtration_{\iteridx-1}
            }\\
            &\leq
            \martingale_{\iteridx-1}^\anyfunction.
        \end{split}
    \end{equation*}
    Thus, $\braces{\martingale_\iteridx^\anyfunction}_{\iteridx\geq0}$ is a nonnegative supermartingale for every $\anyfunction\in\Hspace$.

    For $\iteridx\in\N$, let $\anyevent_\iteridx(\delta)$ be the event
    \begin{equation*}
        \norm*{\sum_{i=1}^\iteridx\rv_i}_{(\regop+\sumcov_\iteridx)^{-1}}^2
        >
        2\log\paren*{
            \frac{
                \det\paren*{
                    \idop+\regop^{-\half}\sumcov_\iteridx\regop^{-\half}
                }^\half
            }{\delta}
        },
    \end{equation*}
    and define the first crossing time $\tstop:=\inf\braces{\iteridx\geq1:\anyevent_\iteridx(\delta)\text{ occurs}}$, with $\inf\emptyset:=\infty$. This is a stopping time because $\anyevent_\iteridx(\delta)\in\filtration_\iteridx$. Set $\tstop_\iteridx:=\min\braces{\tstop,\iteridx}$.

    The stopped process $\braces{\martingale_{\tstop_\iteridx}^\anyfunction}_{\iteridx\geq0}$ is also a nonnegative supermartingale. Therefore, for every $\anyfunction\in\Hspace$,
    \begin{equation*}
        \Ex{\martingale_{\tstop_\iteridx}^\anyfunction}
        \leq
        \Ex{\martingale_0^\anyfunction}
        =
        1, \qquad \iteridx\in\N.
    \end{equation*}
    Equivalently, the stopped random vector $\sum_{i=1}^{\tstop_\iteridx}\rv_i$ and covariance proxy $\sumcov_{\tstop_\iteridx}$ satisfy the hypothesis of \autoref{thr:sn-general-pd}. Applying that result yields
    \begin{equation*}
        \Ex{
            \frac{
                \exp\paren*{
                    \frac{1}{2}
                    \norm*{
                        \sum_{i=1}^{\tstop_\iteridx}\rv_i
                    }_{(\regop+\sumcov_{\tstop_\iteridx})^{-1}}^2
                }
            }{
                \det\paren*{
                    \idop+
                    \regop^{-\half}
                    \sumcov_{\tstop_\iteridx}
                    \regop^{-\half}
                }^\half
            }
        }
        \leq 1.
    \end{equation*}
    Hence, Markov's inequality implies
    \begin{equation*}
        \prob{
            \norm*{
                \sum_{i=1}^{\tstop_\iteridx}\rv_i
            }_{(\regop+\sumcov_{\tstop_\iteridx})^{-1}}^2
            >
            2\log\paren*{
                \frac{
                    \det\paren*{
                        \idop+
                        \regop^{-\half}
                        \sumcov_{\tstop_\iteridx}
                        \regop^{-\half}
                    }^\half
                }{\delta}
            }
        }
        \leq\delta.
    \end{equation*}

    By the definition of the first crossing time, $\tstop\leq\iteridx$ if and only if the claimed inequality is violated at time $\tstop_\iteridx$. Consequently, the preceding bound gives $\prob{\tstop\leq\iteridx}\leq\delta$ for every $\iteridx\in\N$. Since the events $\braces{\tstop\leq\iteridx}$ increase to $\braces{\tstop<\infty}$,
    \begin{equation*}
        \prob{\exists\iteridx\in\N:\anyevent_\iteridx(\delta)}
        =
        \prob{\tstop<\infty}
        =
        \lim_{\iteridx\to\infty}\prob{\tstop\leq\iteridx}
        \leq\delta.
    \end{equation*}
    Taking complements proves that the claimed inequality holds simultaneously for every $\iteridx\in\N$ with probability at least $1-\delta$.
\end{proof}

\subsection{Proof of {\autoref{thr:linear-regression}}}

\begin{proof}
We start with the derivation of a closed-form expression for the least-squares estimator and then proceed to analyse its approximation error. Since $\obsop_\iteridx:\obsspace_\iteridx\to\fspace\subset\mdlspace$, its Banach adjoint restricted to $\mdlspace$ coincides with its Hilbert adjoint. Indeed, for any $\mdlop\in\mdlspace$ and $\outvec\in\obsspace_\iteridx$,
\[
    \inner{\obsop_\iteridx^\adj(\mdlop),\outvec} = \pairing{\mdlop,\obsop_\iteridx(\outvec)} = \inner{\mdlop,\obsop_\iteridx(\outvec)}.
\]
Differentiating the least-squares objective over $\mdlspace$ therefore gives $\nabla\Loss_\iteridx(\mdlop)=2\regsumop_\iteridx\mdlop-2\sum_{i=1}^\iteridx\obsop_i\observation_i$. Since $\regsumop_\iteridx\succeq\regfactor\idop$, the minimizer is unique and satisfies:
\begin{equation}
    \learntop_\iteridx = \regsumop_\iteridx^{-1}\sum_{i=1}^\iteridx\obsop_i\observation_i, \qquad \iteridx\geq 1.
    \label{eq:linear-regression-estimator}
\end{equation}

We first note that $\regsumop_\iteridx^{-1}(\testop)\in\fspace$ for every $\testop\in\fspace$. Let $\obsop_{1:\iteridx}:\obsspace_{1:\iteridx}\to\mdlspace$ be defined by $\obsop_{1:\iteridx}(\outvec_1,\ldots,\outvec_\iteridx):=\sum_{i=1}^\iteridx\obsop_i\outvec_i$, so that $\regsumop_\iteridx=\regfactor\idop+\obsop_{1:\iteridx}\obsop_{1:\iteridx}^\adj$. By Woodbury's identity,
\[
    \regsumop_\iteridx^{-1}
    =
    \regfactor^{-1}\idop
    -
    \regfactor^{-1}\obsop_{1:\iteridx}
    \paren{
        \regfactor\idop+\obsop_{1:\iteridx}^\adj\obsop_{1:\iteridx}
    }^{-1}
    \obsop_{1:\iteridx}^\adj.
\]
Since $\obsop_{1:\iteridx}$ maps $\obsspace_{1:\iteridx}$ into $\fspace$, both terms on the right map $\fspace$ into $\fspace$. Hence $\regsumop_\iteridx^{-1}(\testop)\in\fspace$.

Substituting $\observation_i=\obsop_i^\adj(\targetop)+\vnoise_i$ into \eqref{eq:linear-regression-estimator}, define $\sumrv_\iteridx:=\sum_{i=1}^\iteridx\obsop_i\vnoise_i$. For any $\testop\in\fspace$, since $\regsumop_\iteridx^{-1}(\testop)\in\fspace$, self-adjointness of $\regsumop_\iteridx^{-1}$ on $\mdlspace$ yields:
\begin{align}
\pairing{\targetop-\learntop_\iteridx,\testop}
&=
\pairing{\targetop,\testop}
-\sum_{i=1}^\iteridx
\inner{
    \obsop_i\obsop_i^\adj(\targetop),
    \regsumop_\iteridx^{-1}(\testop)
}
-\inner{
    \sumrv_\iteridx,
    \regsumop_\iteridx^{-1}(\testop)
}\\
&=
\pairing{\targetop,\testop}
-\sum_{i=1}^\iteridx
\pairing{
    \targetop,
    \obsop_i\obsop_i^\adj
    \regsumop_\iteridx^{-1}(\testop)
}
-\inner{
    \sumrv_\iteridx,
    \regsumop_\iteridx^{-1}(\testop)
}\\
&=
\pairing{
    \targetop,
    \paren*{
        \idop-\sum_{i=1}^\iteridx\obsop_i\obsop_i^\adj
        \regsumop_\iteridx^{-1}
    }(\testop)
}
-\inner{
    \sumrv_\iteridx,
    \regsumop_\iteridx^{-1}(\testop)
}\\
&=
\regfactor
\pairing{
    \targetop,
    \regsumop_\iteridx^{-1}(\testop)
}
-\inner{
    \sumrv_\iteridx,
    \regsumop_\iteridx^{-1}(\testop)
},
\label{eq:linear-regression-error-decomp}
\end{align}
where the second equality follows from the defining relation of $\obsop_i^\adj$, and the last follows from the definition $\regsumop_\iteridx=\regfactor\idop+\sum_{i=1}^\iteridx\obsop_i\obsop_i^\adj$. Trace duality and Cauchy-Schwarz then lead us to:
\begin{equation}
    \forall\iteridx\in\N, \quad \abs{\pairing{\targetop-\learntop_\iteridx,\testop}}
    \leq
    \regfactor\norm{\targetop}_\opnorm\norm{\regsumop_\iteridx^{-1}(\testop)}_1
    +
    \norm{\regsumop_\iteridx^{-\half}(\testop)}_2
    \norm*{
        \sum_{i=1}^\iteridx\obsop_i\vnoise_i
    }_{\regsumop_\iteridx^{-1}}, \quad \forall \testop\in\fspace.
    \label{eq:linear-regression-preconcentration}
\end{equation}

Let $\rv_\iteridx:=\obsop_\iteridx\vnoise_\iteridx\in\mdlspace$. By conditional $\vncov_\iteridx$-sub-Gaussianity, $\rv_\iteridx$ is conditionally $\obsop_\iteridx\vncov_\iteridx\obsop_\iteridx^\adj$-sub-Gaussian, which is trace class by assumption. Moreover, $\sumcov_\iteridx\preceq\sigma_\vnoise^2\sum_{i=1}^\iteridx\obsop_i\obsop_i^\adj$, and hence $\sigma_\vnoise^2\regfactor\idop+\sumcov_\iteridx\preceq\sigma_\vnoise^2\regsumop_\iteridx$. Therefore,
\[
    \norm*{
        \sum_{i=1}^\iteridx\obsop_i\vnoise_i
    }_{\regsumop_\iteridx^{-1}}^2
    \leq
    \sigma_\vnoise^2
    \norm*{
        \sum_{i=1}^\iteridx\obsop_i\vnoise_i
    }_{\paren{\sigma_\vnoise^2\regfactor\idop+\sumcov_\iteridx}^{-1}}^2.
\]
Finally, applying \autoref{thr:sn-sequence} with $\regop=\sigma_\vnoise^2\regfactor\idop$, we obtain the following:
\begin{equation}
    \forall\iteridx\in\N, \quad \norm*{
        \sum_{i=1}^\iteridx\obsop_i\vnoise_i
    }_{\regsumop_\iteridx^{-1}}
    \leq
    \sqrt{
        2\sigma_\vnoise^2
        \log\paren*{
            \frac{
                \det\paren{
                    \idop+\regfactor^{-1}\sigma_\vnoise^{-2}\sumcov_\iteridx
                }^\half
            }{\delta}
        }
    },
    \label{eq:linear-regression-noise-bound}
\end{equation}
which holds with probability at least $1-\delta$. Combining \eqref{eq:linear-regression-preconcentration} and \eqref{eq:linear-regression-noise-bound} proves \eqref{eq:linear-regression-bound}. Since the high-probability event in \eqref{eq:linear-regression-noise-bound} does not depend on $\testop$, the result holds simultaneously for every $\testop\in\fspace$.
\end{proof}

\subsection{Proof of \autoref{cor:linear-persistent-excitation}}

\begin{corollary}[Extended restatement of \autoref{cor:linear-persistent-excitation}]
Let the assumptions of \autoref{thr:linear-regression} hold and define $\operator{G}_\iteridx:=\sum_{i=1}^\iteridx\obsop_i\obsop_i^\adj$. Suppose there exists a fixed $d$-dimensional subspace $\mathcal{S}\subset\fspace$, constants $L,\kappa>0$, and $\iteridx_0\in\N$ such that, almost surely, $\operatorname{Ran}(\obsop_\iteridx)\subseteq\mathcal{S}$ and $\norm{\obsop_\iteridx}_\opnorm\leq L$ for every $\iteridx\in\N$, while:
\begin{equation}
    \operator{G}_\iteridx
    \succeq
    \kappa\iteridx P_{\mathcal{S}},
    \qquad
    \iteridx\geq\iteridx_0,
    \label{eq:persistent-excitation}
\end{equation}
where $P_{\mathcal{S}}$ denotes the orthogonal projection onto $\mathcal{S}$ in $\mdlspace$. Let $C_{\mathcal{S}} := \sup_{\substack{\testop\in\mathcal{S}\\\testop\neq0}} \frac{\norm{\testop}_1}{\norm{\testop}_2} <\infty$. Then, for every $\delta\in(0,1]$, with probability at least $1-\delta$, simultaneously for every $\iteridx\geq\iteridx_0$ and $\testop\in\mathcal{S}$,
\begin{equation}
    \abs{\pairing{\targetop-\learntop_\iteridx,\testop}}
    \leq
    \frac{
        \regfactor C_{\mathcal{S}}
        \norm{\targetop}_\opnorm
        \norm{\testop}_2
    }{
        \regfactor+\kappa\iteridx
    }
    +
    \frac{
        \sigma_\vnoise\norm{\testop}_2
    }{
        \sqrt{\regfactor+\kappa\iteridx}
    }
    \sqrt{
        2\log\frac{1}{\delta}
        +
        d\log\paren*{
            1+\frac{\iteridx L^2}{\regfactor}
        }
    }.
    \label{eq:linear-persistent-excitation-rate}
\end{equation}
Consequently, for fixed $\delta$ and $\testop\in\mathcal{S}$,
\begin{equation*}
    \abs{\pairing{\targetop-\learntop_\iteridx,\testop}}
    =
    O\paren*{
        \sqrt{\frac{\log\iteridx}{\iteridx}}
    }.
\end{equation*}
\end{corollary}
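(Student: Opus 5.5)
Start from the bound \eqref{eq:linear-regression-bound} of \autoref{thr:linear-regression}. It already holds on one event of probability at least $1-\delta$, simultaneously for all $\testop$ and $\iteridx$. So it suffices to bound three deterministic quantities on that event, for $\iteridx\geq\iteridx_0$ and $\testop\in\mathcal{S}$: $\norm{\regsumop_\iteridx^{-1}(\testop)}_1$, $\norm{\regsumop_\iteridx^{-\half}(\testop)}_2$, and the Fredholm log-determinant.

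\textbf{Spectral structure of $\regsumop_\iteridx$.} Since $\operatorname{Ran}(\obsop_i)\subseteq\mathcal{S}$, each $\obsop_i\obsop_i^\adj$ is self-adjoint on $\mdlspace$ with range in $\mathcal{S}$. Hence $\operator{G}_\iteridx = P_{\mathcal{S}}\operator{G}_\iteridx P_{\mathcal{S}}$ is reduced by the orthogonal decomposition $\mdlspace=\mathcal{S}\oplus\mathcal{S}^\perp$ and vanishes on $\mathcal{S}^\perp$. Therefore $\regsumop_\iteridx=\regfactor\idop+\operator{G}_\iteridx$ maps $\mathcal{S}$ into itself, as do $\regsumop_\iteridx^{-1}$ and $\regsumop_\iteridx^{-\half}$.

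On $\mathcal{S}$, persistent excitation \eqref{eq:persistent-excitation} gives $\regsumop_\iteridx\succeq(\regfactor+\kappa\iteridx)\idop_{\mathcal{S}}$. This yields $\norm{\regsumop_\iteridx^{-1}\testop}_2\leq\norm{\testop}_2/(\regfactor+\kappa\iteridx)$ and $\norm{\regsumop_\iteridx^{-\half}\testop}_2\leq\norm{\testop}_2/\sqrt{\regfactor+\kappa\iteridx}$. Because $\regsumop_\iteridx^{-1}\testop\in\mathcal{S}$, the constant $C_{\mathcal{S}}$ converts the Hilbert–Schmidt norm into the trace norm, giving the first term of \eqref{eq:linear-persistent-excitation-rate}. $C_{\mathcal{S}}$ is finite because $\mathcal{S}\subset\fspace$ is finite-dimensional, so $\norm{\cdot}_1$ and $\norm{\cdot}_2$ are equivalent on it.

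\textbf{Log-determinant.} Since $\norm{\vncov_i}_\opnorm\leq\sigma_\vnoise^2$, we have $\sumcov_\iteridx\preceq\sigma_\vnoise^2\operator{G}_\iteridx$, so $\regfactor^{-1}\sigma_\vnoise^{-2}\sumcov_\iteridx\preceq\regfactor^{-1}\operator{G}_\iteridx$. Both operators are positive, finite rank, and supported in $\mathcal{S}$: the range of $\sumcov_\iteridx$ lies in $\mathcal{S}$, and by self-adjointness it vanishes on $\mathcal{S}^\perp$. The Fredholm determinants therefore reduce to $d\times d$ determinants on $\mathcal{S}$, where monotonicity of $\det$ under $\preceq$ applies. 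By AM–GM, $\det(\idop+\regfactor^{-1}\operator{G}_\iteridx|_{\mathcal{S}})\leq(1+\tr(\operator{G}_\iteridx)/(d\regfactor))^d$. Moreover, $\operator{G}_\iteridx\preceq\iteridx L^2 P_{\mathcal{S}}$ because $\norm{\obsop_i\obsop_i^\adj}_\opnorm\leq L^2$, so $\tr\operator{G}_\iteridx\leq d\iteridx L^2$ and the log-determinant is at most $d\log(1+\iteridx L^2/\regfactor)$. Then $2\sigma_\vnoise^2\log(\det^{\half}/\delta)\leq\sigma_\vnoise^2\bigl(2\log(1/\delta)+d\log(1+\iteridx L^2/\regfactor)\bigr)$, which gives the second term.

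\textbf{Asymptotics and main obstacle.} The first term is $O(1/\iteridx)$ and the second is $O(\sqrt{\log\iteridx/\iteridx})$, which gives the rate. The main care point is the invariance argument: the monotonicity and determinant steps need $\regsumop_\iteridx$ and $\sumcov_\iteridx$ to genuinely reduce to $\mathcal{S}$. This requires that $\operator{G}_\iteridx$ and $\sumcov_\iteridx$ are self-adjoint with range in $\mathcal{S}$, which we check directly from $\obsop_i\obsop_i^\adj$ and $\obsop_i\vncov_i\obsop_i^\adj$. The sub-Gaussian trace-class hypothesis is automatically satisfied here, since these operators are finite rank.
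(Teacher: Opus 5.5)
Your proposal is correct and follows essentially the same route as the paper. Both arguments use invariance of $\mathcal{S}$ under $\operator{G}_\iteridx$ and $\regsumop_\iteridx$, and the persistent-excitation bound $\regsumop_\iteridx\succeq(\regfactor+\kappa\iteridx)\idop$ on $\mathcal{S}$ for the $\norm{\cdot}_2$ term and, via $C_{\mathcal{S}}$, for the $\norm{\cdot}_1$ term. The log-determinant is handled in both by $\sigma_\vnoise^{-2}\sumcov_\iteridx\preceq\operator{G}_\iteridx$ together with determinant monotonicity.

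The differences are cosmetic. You bound $\det(\idop+\regfactor^{-1}\operator{G}_\iteridx)$ via AM--GM and the trace, where the paper uses rank at most $d$ and $\norm{\operator{G}_\iteridx}_\opnorm\leq\iteridx L^2$ directly; both give $(1+\iteridx L^2/\regfactor)^d$. You also spell out the reduction of $\sumcov_\iteridx$ to $\mathcal{S}$ more explicitly than the paper does.
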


\begin{proof}
Since $\operatorname{Ran}(\obsop_i)\subseteq\mathcal{S}$ for every $i$, the subspace $\mathcal{S}$ is invariant under $\operator{G}_\iteridx$, and hence also under $\regsumop_\iteridx=\regfactor\idop+\operator{G}_\iteridx$. By \eqref{eq:persistent-excitation}, for every $\testop\in\mathcal{S}$ and $\iteridx\geq\iteridx_0$,
\begin{equation*}
    \norm{
        \regsumop_\iteridx^{-1/2}(\testop)
    }_2
    \leq
    \frac{
        \norm{\testop}_2
    }{
        \sqrt{\regfactor+\kappa\iteridx}
    }.
\end{equation*}
Moreover, $\regsumop_\iteridx^{-1}(\testop)\in\mathcal{S}$, so finite-dimensional equivalence of the trace and Hilbert-Schmidt norms on $\mathcal{S}$ gives
\begin{equation*}
    \norm{
        \regsumop_\iteridx^{-1}(\testop)
    }_1
    \leq
    C_{\mathcal{S}}
    \norm{
        \regsumop_\iteridx^{-1}(\testop)
    }_2
    \leq
    \frac{
        C_{\mathcal{S}}\norm{\testop}_2
    }{
        \regfactor+\kappa\iteridx
    }.
\end{equation*}

Next, since $\vncov_i\preceq\sigma_\vnoise^2\idop$,
\begin{equation*}
    \sigma_\vnoise^{-2}\sumcov_\iteridx
    \preceq
    \operator{G}_\iteridx.
\end{equation*}
Furthermore, $\operator{G}_\iteridx$ has rank at most $d$ and
\begin{equation*}
    \norm{\operator{G}_\iteridx}_\opnorm
    \leq
    \sum_{i=1}^\iteridx
    \norm{\obsop_i\obsop_i^\adj}_\opnorm
    \leq
    \iteridx L^2.
\end{equation*}
Hence, by monotonicity of the determinant over positive-semidefinite finite-rank operators,
\begin{equation*}
    \det\paren{
        \idop+
        \regfactor^{-1}
        \sigma_\vnoise^{-2}
        \sumcov_\iteridx
    }
    \leq
    \det\paren{
        \idop+
        \regfactor^{-1}
        \operator{G}_\iteridx
    }
    \leq
    \paren*{
        1+\frac{\iteridx L^2}{\regfactor}
    }^d.
\end{equation*}
Substituting these three bounds into \eqref{eq:linear-regression-bound} gives \eqref{eq:linear-persistent-excitation-rate}. For fixed $\delta$, $\testop$, and problem constants, the regularization term is $O(\iteridx^{-1})$, while the stochastic term is $O(\sqrt{\log\iteridx/\iteridx})$, proving the final claim.
\end{proof}

\subsubsection{Proof of \autoref{thr:nonlinear-regression}}

\begin{proof}
Let $\widetilde{\mdlop}_\iteridx$ be the minimizer of $\Loss_\iteridx$ over the whole space $\mdlspace$. Since $\mdlop_\star=\mdlop(\cdot,\parameter_\star)$ belongs to the parametric model class and $\widehat{\parameter}_\iteridx$ is a global solution of \eqref{eq:nonlinear-parametric-estimator}, we have that:
\begin{equation*}
    \forall\iteridx\in\N, \quad \Loss_\iteridx\bigl(\mdlop(\cdot,\widehat{\parameter}_\iteridx)\bigr)
    \leq
    \Loss_\iteridx(\mdlop_\star).
\end{equation*}
Applying \autoref{lem:loss-difference} first to $\mdlop(\cdot,\widehat{\parameter}_\iteridx)$ and then to $\mdlop_\star$ leads us to:
\begin{equation*}
    \begin{aligned}
        \frac{1}{2}
        \norm{
            \mdlop(\cdot,\widehat{\parameter}_\iteridx)
            -
            \widetilde{\mdlop}_\iteridx
        }_{\operator{H}_\iteridx}^2
        &\leq
        \Loss_\iteridx\bigl(
            \mdlop(\cdot,\widehat{\parameter}_\iteridx)
        \bigr)
        -
        \Loss_\iteridx(\widetilde{\mdlop}_\iteridx)\\
        &\leq
        \Loss_\iteridx(\mdlop_\star)
        -
        \Loss_\iteridx(\widetilde{\mdlop}_\iteridx)\\
        &\leq
        \frac{1}{2}
        \norm{
            \nabla\Loss_\iteridx(\mdlop_\star)
        }_{\operator{H}_\iteridx^{-1}}^2,
    \end{aligned}
\end{equation*}
while the same lemma also yields:
\begin{equation*}
    \frac{1}{2}
    \norm{
        \mdlop_\star-\widetilde{\mdlop}_\iteridx
    }_{\operator{H}_\iteridx}^2
    \leq
    \frac{1}{2}
    \norm{
        \nabla\Loss_\iteridx(\mdlop_\star)
    }_{\operator{H}_\iteridx^{-1}}^2.
\end{equation*}
The triangle inequality therefore implies that:
\begin{equation}
    \forall\iteridx\in\N, \quad \norm{
        \mdlop(\cdot,\widehat{\parameter}_\iteridx)-\mdlop_\star
    }_{\operator{H}_\iteridx}
    \leq
    2
    \norm{
        \nabla\Loss_\iteridx(\mdlop_\star)
    }_{\operator{H}_\iteridx^{-1}}.
    \label{eq:nonlinear-gradient-error}
\end{equation}

By the Hilbert-space chain rule and the definition of $\vnoise_\iteridx$,
\begin{equation*}
    \nabla\Loss_\iteridx(\mdlop_\star)
    =
    \sum_{i=1}^\iteridx
    \obsop_i\vnoise_i
    +
    \nabla R_\iteridx(\mdlop_\star).
\end{equation*}
For each $i\in\range{\iteridx}$, $\obsop_i\vnoise_i$ is conditionally $\obsop_i\vncov_i\obsop_i^\adj$-sub-Gaussian. This covariance proxy is trace class because $\obsop_i$ is Hilbert--Schmidt and $\vncov_i$ is bounded. Moreover, as $\vncov_i\preceq\sigma_\vnoise^2\idop$,
\begin{equation}
    \sumcov_\iteridx
    =
    \sum_{i=1}^\iteridx
    \obsop_i\vncov_i\obsop_i^\adj
    \preceq
    \sigma_\vnoise^2
    \sum_{i=1}^\iteridx
    \obsop_i\obsop_i^\adj.
    \label{eq:sumcov-ub}
\end{equation}
Since $\regfactor_\iteridx\geq\regfactor_0$, it follows that:
\begin{equation*}
    \frac{
        \sigma_\vnoise^2\regfactor_0
    }{\alpha}\idop
    +
    \sumcov_\iteridx
    \preceq
    \frac{\sigma_\vnoise^2}{\alpha}
    \left(
        \regfactor_\iteridx\idop
        +
        \alpha
        \sum_{i=1}^\iteridx
        \obsop_i\obsop_i^\adj
    \right)
    =
    \frac{\sigma_\vnoise^2}{\alpha}
    \operator{H}_\iteridx.
\end{equation*}
Inverting this operator inequality, we then have that:
\begin{equation*}
    \norm*{
        \sum_{i=1}^\iteridx\obsop_i\vnoise_i
    }_{\operator{H}_\iteridx^{-1}}
    \leq
    \frac{\sigma_\vnoise}{\sqrt{\alpha}}
    \norm*{
        \sum_{i=1}^\iteridx\obsop_i\vnoise_i
    }_{
        \left(
            \frac{
                \sigma_\vnoise^2\regfactor_0
            }{\alpha}\idop
            +
            \sumcov_\iteridx
        \right)^{-1}
    }.
\end{equation*}
Applying \autoref{thr:sn-sequence} with regularization operator $(\sigma_\vnoise^2\regfactor_0/\alpha)\idop$ shows that, with probability at least $1-\delta$, simultaneously for every $\iteridx\in\N$,
\begin{equation*}
    \norm*{
        \sum_{i=1}^\iteridx\obsop_i\vnoise_i
    }_{\operator{H}_\iteridx^{-1}}
    \leq
    \frac{\sigma_\vnoise}{\sqrt{\alpha}}
    \sqrt{
        2\log\paren*{
            \frac{
                \det\paren*{
                    \idop+
                    \frac{\alpha}{
                        \sigma_\vnoise^2\regfactor_0
                    }
                    \sumcov_\iteridx
                }^{1/2}
            }{\delta}
        }
    },
\end{equation*}
whose determinant can be further simplified by \eqref{eq:sumcov-ub}.
Finally, the triangle inequality applied to the gradient decomposition gives
\begin{equation*}
    \norm*{
        \nabla\Loss_\iteridx(\mdlop_\star)
    }_{\operator{H}_\iteridx^{-1}}
    \leq
    \norm*{
        \nabla R_\iteridx(\mdlop_\star)
    }_{\operator{H}_\iteridx^{-1}}
    +
    \norm*{
        \sum_{i=1}^\iteridx\obsop_i\vnoise_i
    }_{\operator{H}_\iteridx^{-1}}.
\end{equation*}
Substitution into \eqref{eq:nonlinear-gradient-error} proves \eqref{eq:nonlinear-regression-error}.
\end{proof}



\end{document}